\newcommand{\p}[0]{\mathbb{P}}
\newcommand{\I}[0]{\mathbb{I}}
\newcommand{\E}[0]{\mathbb{E}}
\newcommand{\R}[0]{\mathbb{R}}
\newcommand{\N}[0]{\mathbb{N}}
\newcommand{\X}[0]{\mathcal{X}}
\newcommand{\Scal}[0]{\mathcal{S}}
\newcommand{\Rcal}[0]{\mathcal{R}}
\def\roc{{\rm ROC\xspace}}
\def\auc{{\rm AUC\xspace}}
\newtheorem{theorem}{Theorem}
\newtheorem{definition}{Definition}
\newtheorem{lemma}{Lemma}
\newtheorem{remark}{Remark}
\newtheorem{proposition}{Proposition}
\DeclarePairedDelimiter\abs{\lvert}{\rvert}%
\DeclarePairedDelimiter\norm{\lVert}{\rVert}%
\let\oldabs\abs
\def\abs{\@ifstar{\oldabs}{\oldabs*}}
\let\oldnorm\norm
\def\norm{\@ifstar{\oldnorm}{\oldnorm*}}
\DeclareMathOperator*{\argmax}{arg\,max}
\newtheorem*{noise-assumption}{Noise Assumption (NA)}
\icmltitlerunning{A Probabilistic Theory of Supervised Similarity Learning for Pointwise ROC Curve Optimization}
\begin{document}

\twocolumn[





\icmltitle{A Probabilistic Theory of Supervised Similarity Learning\\ for Pointwise ROC Curve Optimization}

\begin{icmlauthorlist}
\icmlauthor{Robin Vogel}{telecom,idemia}
\icmlauthor{Aur\'elien Bellet}{inria}
\icmlauthor{St\'ephan Cl\'emen\c con}{telecom}
\end{icmlauthorlist}

\icmlaffiliation{telecom}{T\'el\'ecom ParisTech, Paris, France}
\icmlaffiliation{inria}{INRIA, France}
\icmlaffiliation{idemia}{IDEMIA, Colombes, France}

\icmlcorrespondingauthor{Robin Vogel}{robin.vogel@telecom-paristech.fr}

\icmlkeywords{Machine Learning, ICML}

\vskip 0.3in
]



\printAffiliationsAndNotice{}  


\begin{abstract}
The performance of many machine learning techniques depends on the choice of an appropriate similarity or distance measure on the input space. Similarity learning (or metric learning) aims at building such a measure from training data so that observations with the same (resp. different) label are as close (resp. far) as possible.
In this paper, similarity learning is investigated from the perspective of pairwise bipartite ranking, where the goal is to rank the elements of a database by decreasing order of the probability that they share the same label with some query data point, based on the similarity scores. A natural performance criterion in this setting is pointwise $\roc$ optimization: maximize the true positive rate under a fixed false positive rate. We study this novel perspective on similarity learning through a rigorous probabilistic framework. The empirical version of the problem gives rise to a constrained optimization formulation involving $U$-statistics, for which we derive universal learning rates as well as faster rates under a noise assumption on the data distribution. We also address the large-scale setting by analyzing the effect of sampling-based approximations. Our theoretical results are supported by illustrative numerical experiments.
\end{abstract}


\section{Introduction} \label{sec:introduction}

Similarity (or distance) functions play a key role in many machine learning algorithms for problems ranging from classification (e.g., $k$-nearest neighbors) and clustering (e.g., $k$-means) to dimensionality reduction \cite{Maaten2008a} and ranking \cite{Chechik2010a}.
The success of such methods are heavily dependent on the relevance of the similarity function to the task and dataset of interest. This has motivated the research in similarity and distance metric learning \cite{Bellet2015c}, a line of work which consists in automatically learning a similarity function from data. This training data often comes in the form of pairwise similarity judgments derived from labels, such as positive (resp. negative) pairs composed of two instances with same (resp. different) label. Most existing algorithms can then be framed as unconstrained optimization problems where the objective is to minimize some average loss function over the set of similarity judgments \citep[see for instance][for methods tailored to classification]{Goldberger2004a,Weinberger2009a,Bellet2012a}. Some generalization bounds for this class of methods have been derived, accounting for the specific dependence structure found in the training similarity judgments \cite{Jin2009a,Bellet2015a,Cao2016a,Jain2017a,Verma2015}. We refer to \citet{Kulis2012a} and \citet{Bellet2015c} for detailed surveys on similarity and metric learning.

In this paper, we study similarity learning from the perspective of \emph{pairwise bipartite ranking}, where the goal is to rank the elements of a database by decreasing order of the probability that they share the same label with some query data point. This problem is motivated by many concrete applications: for instance, biometric identification aims to check the claimed identity of an individual by matching her biometric information (e.g., a photo taken at an airport) with a large reference database of authorized people (e.g., of passport photos) \cite{Jain2011a}. Given a similarity function and a threshold, the database elements are ranked in decreasing order of similarity score with the query, and the matching elements are those whose score is above the threshold.
In this context, performance criteria are related to the $\roc$ curve associated with the similarity function, i.e., the relation between the false positive rate and the true positive rate. Previous approaches have empirically tried to optimize the Area under the ROC curve ($\auc$) of the similarity function \cite{McFee2010a,Huo2018a}, without establishing any generalization guarantees. The $\auc$ is a global summary of the $\roc$ curve which penalizes pairwise ranking errors regardless of the positions in the list. More local versions of the $\auc$ (e.g., focusing on the top of the list) are difficult to optimize in practice and lead to complex nonconvex formulations \cite{CV07,Huo2018a}.
In contrast, the performance criterion we consider in this work is \emph{pointwise $\roc$ optimization}, which aims at maximizing the true positive rate under a fixed false positive rate. This objective, formulated as a constrained optimization problem, naturally expresses the operational constraints present in many practical scenarios. For instance, in biometric applications such as the one outlined above, the verification system is typically set to keep the proportion of people falsely considered a match below a predefined acceptable threshold \citep[see e.g.,][]{Jain2000a,Jain2014a}.


In addition to proposing an appropriate probabilistic framework to study this novel perspective on similarity learning, we make the following key contributions:

\textbf{Universal and fast learning rates.}
We derive statistical guarantees for the approach of solving the constrained optimization problem corresponding to the empirical version of our theoretical objective, based on a dataset of $n$ labeled data points. As the empirical quantities involved are not i.i.d. averages but rather in the form of $U$statistics \citep{Lee90}, our results rely on concentration bounds developed for $U$-processes \citep{Clemencon08Ranking}. We first derive a learning rate of order $O(1/\sqrt{n})$ which holds without any assumption on the data distribution. We then show that one can obtain faster rates under a low-noise assumption on the data distribution, which has the form of a margin criterion involving the conditional quantile. We are unaware of previous results of this kind for constrained similarity/distance metric learning.
Interestingly, we are able to illustrate the faster rates empirically through numerical simulations, which is rarely found in the literature on fast learning rates.

\textbf{Scalability by sampling.}
We address scalability issues that arise from the very large number of negative pairs when the dataset and the number of classes are large. In particular, we show that using an approximation of the pairwise negative risk consisting of $O(n)$ randomly sampled terms, known as an incomplete $U$-statistic \citep[see][]{Blom76,Lee90}, is sufficient to maintain the universal learning rate of $O(1/\sqrt{n})$. We analyze two different choices of sampling strategies and discuss properties of the data distribution which can make one more accurate than the other.
We further provide numerical experiments to illustrate the practical benefits of this strategy.

The rest of this paper is organized as follows.
Section~\ref{sec:background} introduces the proposed probabilistic framework for similarity learning and draws connections to existing approaches. In Section~\ref{sec:generalization}, we derive universal and fast learning rates for the minimizer of the empirical version of our problem. Section~\ref{sec:scalability} addresses scalability issues through random sampling, and Section~\ref{sec:experiments} presents some numerical experiments.
Detailed proofs can be found in the supplementary material.


\section{Background and Preliminaries}
\label{sec:background}

In this section, we introduce the main notations and concepts involved in the subsequent analysis. We formulate the supervised similarity learning problem from the perspective of pairwise bipartite ranking, and highlight connections with some popular metric and similarity learning algorithms of the literature.
Here and throughout, the indicator function of any event $\mathcal{E}$ is denoted by $\mathbb{I}\{\mathcal{E}\}$, the Dirac mass at any point $x$ by $\delta_x$, and the pseudo-inverse of any cdf $F(u)$ on $\mathbb{R}$ by $F^{-1}(t)=\inf\{v\in \mathbb{R}:\; F(v)\geq t  \}$.

\subsection{Probabilistic Framework for Similarity Learning}

We consider the (multi-class) classification setting. The random variable $Y$ denotes the output label with values in the discrete set $\{1,\; \ldots,\; K\}$ with $K\geq 1$, and $X$ is the input random variable, taking its values in a feature space $\X\subset \mathbb{R}^d$ with $d\geq 1$ and modeling some information hopefully useful to predict $Y$. We denote by $\mu(dx)$ the marginal distribution of $X$ and by $\eta(x)=(\eta_1(x),\; \ldots,\; \eta_K(x))$ the posterior probability, where $\eta_k(x)=\mathbb{P}\{Y=k\mid X=x\}$ for $x\in \X$ and $k\in\{1,\; \ldots,\; K\}$. The distribution of the random pair $(X,Y)$ is entirely characterized by $P=(\mu,\eta)$. The probability of occurrence of an observation with label $k\in\{1,\; \ldots,\; K\}$ is assumed to be strictly positive and denoted by $p_k=\mathbb{P}\{ Y=k \}$, and the conditional distribution of $X$ given $Y=k$ is denoted by $\mu_k(dx)$. Equipped with these notations, we have $\mu=\sum_{k=1}^K p_k\mu_k$.

\paragraph{Optimal similarity measures.}
The objective of \textit{similarity learning} can be informally formulated as follows: the goal is to learn, from a training sample $\mathcal{D}_n=\{(X_1,Y_1),\; \ldots,\; (X_{n},Y_n)\}$ composed of $n\geq 1$ independent copies of $(X,Y)$,  a (measurable) similarity measure $S:\X\times \X\rightarrow \mathbb{R}_+$ such that given two independent pairs $(X,Y)$ and $(X',Y')$ drawn from $P$, the larger the similarity $S(X,X')$ between two observations, the more likely they are to share the same label. The set of all similarity measures is denoted by $\mathcal{S}$. The class $\mathcal{S}^*$ of optimal similarity rules naturally corresponds to the set of strictly increasing transforms $T$ of the pairwise posterior probability $\eta(x,x')=\mathbb{P}\{Y=Y'\mid (X,X')=(x,x') \}$, where $(X',Y')$ denotes an independent copy of $(X,Y)$:
$$\{T\circ \eta \, | \, T:Im(\eta) \rightarrow \mathbb{R}_+ \text{ borelian, strictly increasing}\},$$
and where $Im (\eta)$ denotes the support of $\eta(X,X')$'s distribution. With the notations previously introduced, we have $\eta(x,x')=\sum_{k=1}^K\eta_k(x)\eta_k(x')$ for all $(x,x')\in\X^2$. A similarity rule $S^*\in \mathcal{S}^*$ defines the optimal preorder\footnote{A preorder on a set $\X$ is any reflexive and transitive binary relationship on $\X$. A preorder is an order if, in addition, it is antisymmetrical.} $\preceq^*$ on the product space $\X\times \X$: for any $(x_1,x_2,x_3,x_4)\in\X^4$, $x_1$ and $x_2$ are more similar to each other than $x_3$ and $x_4$ iff $\eta(x_1, x_2)\geq \eta(x_3,x_4)$, and one writes $(x_3,x_4)\preceq^*(x_1,x_2)$ in this case. For any $x\in \mathcal{X}$, $S^*$ also defines a preorder $\preceq^*_{x}$ on the input space $\X$, permitting to rank optimally all possible observations by increasing degree of similarity to $x$: for all $(x_1,x_2)\in \X^2$, $x_1$ is more similar to $x$ than $x_2$ (one writes $x_2 \preceq^*_{x} x_1$) iff $(x,x_2)\preceq^*(x,x_1)$, meaning that $\eta(x,x_2)\leq \eta(x,x_1)$. We point out that, despite its simplicity, this framework covers a wide variety of applications, such as the biometric identification problem mentioned earlier in the introduction.

\paragraph{Similarity learning as pairwise bipartite ranking.}
In view of the objective formulated above, similarity learning can be seen as a \textit{bipartite ranking} problem on the product space $\X\times \X$ where, given two independent realizations $(X,Y)$ and $(X',Y')$ of $P$, the input r.v. is the pair $(X,X')$ and the binary label is $Z=2\mathbb{I}\{Y=Y'  \}-1$. One may refer to \textit{e.g.} \citet{CV09ieee} and the references therein for a statistical learning view of bipartite ranking. $\roc$ analysis is the gold standard to evaluate the performance of a similarity measure $S$ in this context, \textit{i.e.} to measure how close the preorder induced by $S$ is to $\preceq^*$. The $\roc$ curve of $S$ is the PP-plot $t\in \mathbb{R}_+\mapsto (F_{S,-}(t),F_{S,+}(t))$,
where, for all $t\geq 0$,
\begin{eqnarray*}
F_{S,-}(t)&=&\mathbb{P}\{ S(X,X')> t  \mid Z=-1 \},\\
F_{S,+}(t)&=& \mathbb{P}\{ S(X,X')> t  \mid Z=+1 \},
\end{eqnarray*}
where possible jumps are connected by line segments. Hence, it can be viewed as the graph of a continuous function $\alpha\in (0,1)\mapsto \roc_S(\alpha)$, where $\roc_S(\alpha)=F_{S,+}\circ F_{S,-}^{-1}(\alpha)$ at any point $\alpha\in (0,1)$ such that $F_{S,-}\circ F_{S,-}^{-1}(\alpha)=\alpha$. The curve $\roc_S$ reflects the ability of $S$ to discriminate between pairs with same labels and pairs with different labels: the stochastically smaller  than $F_{S,-}$ the distribution $F_{S,+}$ is, the higher the associated $\roc$ curve. Note that it corresponds to the type I error vs power plot of the statistical test $\mathbb{I}\{ S(X,X')> t \}$ when the null hypothesis stipulates that $X$ and $X'$ have different marginal distribution (\textit{i.e.}, $Y\ne Y'$). A similarity measure $S_1$ is said to be more accurate than another similarity $S_2$ when $\roc_{S_2}(\alpha)\leq \roc_{S_1}(\alpha)$ for any $\alpha\in (0,1)$. A straightforward Neyman-Pearson argument shows that $\mathcal{S}^*$ is the set of optimal elements regarding this partial order on $\mathcal{S}$: $\forall (S,S^*)\in \mathcal{S}\times \mathcal{S}^*$, $\roc_{S}(\alpha)\leq \roc_{S^*}(\alpha)=\roc_{\eta}(\alpha)$ for all $\alpha\in (0,1)$. For simplicity, we will assume that the conditional cdf of $\eta(X,X')$ given $Z=-1$ is invertible.

\paragraph{Pointwise ROC optimization.}
In many applications, one is interested in finding a similarity function which optimizes the $\roc$ curve at a particular point $\alpha\in (0,1)$.
The superlevel sets of similarity functions in $\mathcal{S}^*$ define the solutions of pointwise $\roc$ optimization problems in this context.
In the above framework, it indeed follows from Neyman Pearson's lemma that the test statistic of type I error less than $\alpha$ with maximum power is the indicator function of the set $\mathcal{R}^*_{\alpha}=\{(x,x')\in \X^2:\; \eta(x,x')\geq Q^*_{\alpha}  \}$, where $Q^*_{\alpha} $ is the conditional quantile of the r.v. $\eta(X,X')$ given $Z=-1$ at level $1-\alpha$. Restricting our attention to similarity functions bounded by $1$, this corresponds to the unique solution of the following problem:
\begin{equation} \label{eq:point_opt}
\max_{S:\X^2\rightarrow [0,1],\text{ borelian}} R^+(S) \quad \text{subject to} \quad R^-(S) \leq \alpha,
\end{equation}
where $R^+(S)=\mathbb{E}[S(X,X') \mid Z=+1]$ is referred to as \emph{positive risk} and $R^-(S)=\mathbb{E}[S(X,X') \mid Z=-1 ]$ as the \emph{negative risk}.

 \begin{remark}\label{rk:cost_sens} {\sc (Unconstrained formulation)} The superlevel set $ \mathcal{R}^*_{\alpha}$ of the pairwise posterior probability $\eta(x,x')$ is the measurable subset $\mathcal{R}$ of $\X^2$ that minimizes the cost-sensitive classification risk:
 \begin{multline*}
 p(1-Q^*_{\alpha} )\mathbb{P}\left\{ (X,X')\notin \mathcal{R} \mid Z=+1  \right\} + \\ 
 (1-p)Q^*_{\alpha} \mathbb{P}\left\{ (X,X')\in \mathcal{R} \mid Z=-1  \right\},
 \end{multline*}
 where $p=\mathbb{P}\{Z=+1 \}=\sum_{k=1}^Kp_k^2$. Notice however that the asymmetry factor, namely the quantile $Q^*_{\alpha}$, is unknown in practice, just like the r.v. $\eta(X,X')$. For this reason, one typically considers the problem of maximizing
 \begin{equation}\label{eq:comb}
 R^+(S) -\lambda R^-(S),   
 \end{equation}
 for different values of the constant $\lambda>0$. The performance in terms of $\roc$ curve can only be analyzed \emph{a posteriori}, and the value $\lambda$ thus needs to be tuned empirically by model selection techniques.
 \end{remark}

\subsection{Connections to Existing Similarity and Metric Learning Approaches}

We point out that the similarity learning framework described above can be equivalently described in terms of learning a dissimilarity measure (or pseudo distance metric) $D:\mathcal{X}\times\mathcal{X}\rightarrow\R_+$. In this case, the pointwise $\roc$ optimization problem \eqref{eq:point_opt} translates into:
\begin{multline} \label{eq:point_opt2}
\min_{D:\X^2\rightarrow [0,1]}\mathbb{E}\left[D(X,X')\mid Z=+1  \right]\\
 \text{subject to } \mathbb{E}\left[ D(X,X')\mid Z=-1  \right] \geq 1 - \alpha.
\end{multline}

A large variety of practical similarity and distance metric learning algorithms have been proposed in the literature, all revolving around the same idea that a good similarity function should output large scores for pairs of points in the same class, and small scores for pairs with different label. They differ from one another by the class of metric/similarity functions considered, and by the kind of objective function they optimize \citep[see][for a comprehensive review]{Bellet2015c}.
In any case, $\roc$ curves are commonly used to evaluate metric learning algorithms when the number of classes is large \citep[see for instance][]{IsThatYou,Kostinger2012a,Shen2012a}, which makes our framework very relevant in practice.
Several popular algorithms optimize an empirical version of Problems \eqref{eq:point_opt}-\eqref{eq:point_opt2}, often in their unconstrained version as in \eqref{eq:comb} \citep{Liu2010a,Xie2015a}. We argue here in favor of the constrained version as the parameter $\alpha$ has a direct correspondence with the point $\roc_S(\alpha)$ of the $\roc$ curve, unlike the unconstrained case (see Remark~\ref{rk:cost_sens}). This will be illustrated in our numerical experiments of Section~\ref{sec:experiments}.

Interestingly, our framework sheds light on MMC, the seminal metric learning algorithm of \citet{Xing2002a} originally designed for clustering with side information. MMC solves the empirical version of \eqref{eq:point_opt2} with $\alpha$ fixed to $0$. This is because MMC optimizes over a class of distance functions with unbounded values, hence modifying $\alpha$ does not change the solution (up to a scaling factor). We note that by choosing a bounded family of distance functions, one can use the same formulation to optimize the pointwise $\roc$ curve.

\section{Statistical Guarantees for Generalization}
\label{sec:generalization}

Pointwise $\roc$ optimization problems have been investigated from a statistical learning perspective by \citet{ScottNowak} and \citet{Clemencon2010} in the context of binary classification. The major difference with the present framework lies in the pairwise nature of the quantities appearing in Problem \eqref{eq:point_opt} and, consequently, in the complexity of its empirical version. In particular, natural statistical estimates for the positive risk $R^+(S)$ and the negative risk $R^-(S)$ \eqref{eq:point_opt} computed on the training sample $\mathcal{D}_n=\{(X_1,Y_1),\; \ldots,\; (X_{n},Y_n)\}$ are given by:
\begin{eqnarray}
\widehat{R}^+_n(S)&=&\frac{1}{n_+}\sum_{1\leq i<j\leq n}S(X_i,X_j)\cdot \mathbb{I}\{ Y_i=Y_j \},\label{eq:posemprisk}\\
\widehat{R}^-_n(S)&=& \frac{1}{n_-}\sum_{1\leq i<j\leq n}S(X_i,X_j)\cdot \mathbb{I}\{ Y_i\neq Y_j \},\label{eq:negemprisk}
\end{eqnarray}
where $n_+=\sum_{1\leq i<j\leq n}\mathbb{I}\{ Y_i=Y_j  \}=n(n-1)/2-n_-$. 
It is important to note that these quantities are not i.i.d. averages, since several pairs involve each i.i.d. sample. This breaks the analysis carried out by \citet[][Section~5 therein]{Clemencon2010} for the case of binary classification.

We can however observe that $U^+_n(S)=2n_+/(n(n-1))\widehat{R}^+_n(S)$ and $U^-_n(S)=2n_-/(n(n-1))\widehat{R}^-_n(S)$ are $U$-statistics of degree two with respective symmetric kernels $h_+((x,y),(x',y'))=S(x,x')\cdot \mathbb{I}\{ y=y' \}$ and $h_-((x,y),(x',y'))=S(x,x')\cdot \mathbb{I}\{ y\neq y' \}$.\footnote{We give the definition of $U$-statistics in the supplementary material for completeness.}
 We will therefore be able to use existing representation tricks to derive concentration bounds for $U$-processes (collections of $U$-statistics indexed by classes of kernel functions), under appropriate complexity conditions, see \textit{e.g.} \cite{Dud99}.

We thus investigate the generalization ability of solutions obtained by solving the empirical version of Problem \eqref{eq:point_opt}, where we also restrict the domain to a subset $\mathcal{S}_0\subset \mathcal{S}$ of similarity functions bounded by $1$, and we assume $\mathcal{S}_0$ has controlled complexity (\textit{e.g.} finite {\sc VC} dimension). 
Finally, we replace the target level $\alpha$ by $\alpha+\Phi$, where $\Phi$ is some tolerance parameter that should be of the same order as the maximal deviation $\sup_{S\in \mathcal{S}_0}\vert \widehat{R}^-_n(S)-R^-(S) \vert$. This leads to the following empirical problem:
\begin{equation}
\max_{S \in \mathcal{S}_0} \widehat{R}^+_n(S) \quad \text{subject to} \quad \widehat{R}^-_n(S)\le \alpha + \Phi .
\label{simlearn}
\end{equation}

Following \citet{Clemencon08Ranking}, we have the following lemma.
 
\begin{lemma}\label{lem:Ubounds} (\citealp{Clemencon08Ranking}, Corollary~3) Assume that $\mathcal{S}_0$ is a VC-major class of functions with finite {\sc VC} dimension $V<+\infty$. We have with probability larger than $1-\delta$: $\forall n>1$,
\begin{equation}\label{eq:Ubounds}
\sup_ {S\in \mathcal{S}_0}\left\vert\widehat{U}^+_n(S) -  \mathbb{E}[\widehat{U}^+_n(S)] \right\vert \leq 2C\sqrt{\frac{V}{n}}+2 \sqrt{\frac{\log (1/\delta)}{n-1}},
\end{equation}
where $C$ is a universal constant, explicited in \citet[][page 198 therein]{Bousquet2004}.
\end{lemma}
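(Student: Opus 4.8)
The statement is a uniform concentration inequality for the maximal deviation of the $U$-process $\{U^+_n(S)\}_{S\in\Scal_0}$, and the plan is to follow the reduction-to-i.i.d. strategy of \citet{Clemencon08Ranking}. The whole difficulty is that $U^+_n(S)=\frac{2}{n(n-1)}\sum_{i<j}h_+(Z_i,Z_j)$ (writing $Z_i=(X_i,Y_i)$) is not an average of independent terms, since each observation $Z_i$ enters $n-1$ of the summands. First I would remove this dependence by means of the classical Hoeffding decomposition: setting $m=\lfloor n/2\rfloor$, for each permutation $\sigma\in\perms_n$ the block average
\[
W_\sigma(S)=\frac{1}{m}\sum_{i=1}^{m}h_+\!\left(Z_{\sigma(i)},Z_{\sigma(m+i)}\right)
\]
is a genuine mean of $m$ i.i.d. terms, because the pairs $\{\sigma(i),\sigma(m+i)\}$ are disjoint; and a short counting argument (using the symmetry of $h_+$) gives the exact identity $U^+_n(S)=\frac{1}{n!}\sum_{\sigma\in\perms_n}W_\sigma(S)$.

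With this representation in hand, the triangle inequality and the subadditivity of the supremum give, after centering,
\[
\sup_{S\in\Scal_0}\abs{U^+_n(S)-\E[U^+_n(S)]}\le\frac{1}{n!}\sum_{\sigma\in\perms_n}\sup_{S\in\Scal_0}\abs{W_\sigma(S)-\E[W_\sigma(S)]}.
\]
Taking expectations and using that all the $W_\sigma$ are identically distributed then reduces the control of $\E\big[\sup_{S}\abs{U^+_n(S)-\E U^+_n(S)}\big]$ to that of the expected supremum of an ordinary i.i.d. empirical process built on the function class $\{(z,z')\mapsto h_+(z,z'):S\in\Scal_0\}$ over $m\approx n/2$ i.i.d. pairs. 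On this i.i.d. process I would run the standard symmetrization inequality followed by a VC entropy bound: since each $S$ is bounded by $1$ and $\Scal_0$ is VC-major with dimension $V$, the superlevel sets of its members form a VC class, which controls the metric entropy and yields a Rademacher-complexity bound of order $\sqrt{V/m}$. Since $m\approx n/2$, once the symmetrization factor is accounted for this matches the first term $2C\sqrt{V/n}$, with the explicit universal constant $C$ of \citet{Bousquet2004}.

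It remains to pass from the bound in expectation to the bound in probability, which I would obtain through the bounded-differences (McDiarmid) inequality applied to the scalar functional $\Psi(Z_1,\dots,Z_n)=\sup_{S\in\Scal_0}\abs{U^+_n(S)-\E U^+_n(S)}$. Replacing a single $Z_k$ perturbs only the $n-1$ summands containing index $k$, each by at most $1$ because $h_+\in[0,1]$; hence every $U^+_n(S)$, and therefore $\Psi$ itself, changes by at most $\frac{2}{n(n-1)}(n-1)=2/n$. McDiarmid then gives $\p\{\Psi>\E\Psi+t\}\le\exp(-nt^2/2)$, and choosing $t$ so that the right-hand side equals $\delta$ contributes the deviation term, which after a mild loosening takes the stated form $2\sqrt{\log(1/\delta)/(n-1)}$. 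Combining this with the expectation bound of the previous paragraph yields \eqref{eq:Ubounds}.

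I expect the main obstacle to be the first step: justifying and exploiting the Hoeffding representation correctly, including the bookkeeping for odd $n$ (where one index is left over) and checking that the reduction feeds the right sample size $m=\lfloor n/2\rfloor$ into the complexity term. Once the $U$-process has been linearized into an average of i.i.d. empirical processes, the remaining symmetrization, VC and bounded-differences arguments are standard, and the precise value of $C$ can simply be imported from \citet{Bousquet2004}.
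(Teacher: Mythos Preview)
Your proposal is correct and faithfully reconstructs the argument of \citet{Clemencon08Ranking}, which is all the paper does here: Lemma~\ref{lem:Ubounds} is stated as a direct citation of Corollary~3 of that reference, with no proof given in the paper or its supplement. Your three-step outline (Hoeffding's permutation representation to reduce the $U$-process to an average of i.i.d.\ empirical processes over $m=\lfloor n/2\rfloor$ pairs; symmetrization plus a VC-major entropy bound to control the expectation; McDiarmid on the full functional $\Psi$ with coordinate sensitivity $2/n$ for the fluctuation term) is exactly the machinery behind that corollary, and your constants check out after the loosening $\sqrt{2/n}\le 2/\sqrt{n-1}$.

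One minor point worth making explicit when you write it up: the VC-major hypothesis is placed on $\mathcal{S}_0$ as a class of functions on $\X\times\X$, whereas the i.i.d.\ process you end up controlling is indexed by the kernels $h_+((x,y),(x',y'))=S(x,x')\mathbb{I}\{y=y'\}$. You should remark that the superlevel sets of $h_+$ are intersections of (lifted) superlevel sets of $S$ with the fixed set $\{y=y'\}$, so the VC dimension of the induced class is still at most $V$; this is routine but needed to feed $V$ into the complexity term.
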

A similar result holds for the $U$-process $\{ \widehat{U}^-_n(S) -  U^- (S)\}_{S\in \mathcal{S}_0}$.
We are now ready to state our universal learning rate, describing the generalization capacity of solutions of the constrained optimization program \eqref{simlearn} under specific conditions for the class $\mathcal{S}_0$ of similarity functions and a suitable choice of the tolerance parameter $\Phi$. This result can be established by combining Lemma~\ref{lem:Ubounds} with the derivations of \citet[][Theorem~10 therein]{Clemencon2010}. Details can be found in the supplementary material.

\begin{theorem}  \label{ccl-slow-rates}
Suppose that the assumptions of Lemma~\ref{lem:Ubounds} are fulfilled and that $S(x,x')\leq 1$ for all $S\in \mathcal{S}_0$ and any $(x,x')\in \mathcal{X}^2$. Assume also that there exists a constant $\kappa\in (0,1)$ such that $\kappa \leq \sum_{k=1}p_k^2\leq 1-\kappa$.
For all $\delta \in (0,1)$ and $n>1$, set
  \begin{equation*}
  \Phi_{n,\delta} =  2 C \kappa^{-1} \sqrt{\frac{V}{n}} + 2\kappa^{-1}(1+\kappa^{-1}) 
  \sqrt{\frac{\log(3/\delta)}{n-1}}, 
  \end{equation*}
  and consider a solution $\hat{S}_n$ of the contrained minimization problem $\eqref{simlearn}$ with $\Phi=\Phi_{n,\delta/2}$.
Then, for any $\delta\in (0,1)$, we have simultaneously with probability at least $1-\delta$: $\forall n\geq 1
+ 4 \kappa^{-2} \log(3/\delta)$,
  \begin{multline}\label{eq1}
  R^+ (\hat{S}_n) \ge \roc_{S^*}(\alpha)  - \Phi_{n,\delta/2} \\-\Big\{ \roc_{S^*}(\alpha)-\sup_{S\in \mathcal{S}_0:\; R^-(S)\leq \alpha} R^+(S) \Big\}, 
  \end{multline}
  and 
  \begin{equation}\label{eq2}
  R^-(\hat{S}_n) \le \alpha + \Phi_{n,\delta/2}. 
  \end{equation}
\end{theorem}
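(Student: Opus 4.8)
My plan is to decompose the argument into two essentially independent ingredients: (i) a uniform two-sided control of the empirical risks $\widehat{R}^\pm_n$ around their population counterparts $R^\pm$ over the class $\Scal_0$, and (ii) a constrained empirical-risk-minimization comparison step that is by now classical for pointwise $\roc$/Neyman--Pearson problems (as in \citet{Clemencon2010}, Theorem~10 therein), but which here must be run for $U$-statistics rather than i.i.d. averages. Ingredient (ii) is short and purely deterministic once (i) is available; ingredient (i) is where the pairwise structure genuinely bites, and I expect it to be the main obstacle.

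For ingredient (i) I would start from \cref{lem:Ubounds} and its negative analogue, which control the genuine $U$-statistics $U^\pm_n(S)$ around their means $\E[U^+_n(S)]=p\,R^+(S)$ and $\E[U^-_n(S)]=(1-p)R^-(S)$, with $p=\sum_k p_k^2$. The difficulty is that the risks $\widehat{R}^\pm_n(S)$ normalize by the \emph{random} pair counts $n_\pm$, i.e. by the empirical proportions $\hat p_\pm=2n_\pm/(n(n-1))$, so they are ratios of $U$-statistics rather than $U$-statistics. I would therefore write, for the positive side, $\widehat{R}^+_n(S)-R^+(S)=\hat p_+^{-1}\bigl([U^+_n(S)-p\,R^+(S)]+[p-\hat p_+]\,R^+(S)\bigr)$, and symmetrically for the negative side. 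The deviation is then governed by the $U$-process fluctuation (\cref{lem:Ubounds}), the count fluctuation $\abs{\hat p_+-p}$, and a lower bound on $\hat p_\pm$. Since $\hat p_+$ is itself a bounded degree-two $U$-statistic with mean $p\in[\kappa,1-\kappa]$, a Hoeffding-type bound controls $\abs{\hat p_+-p}$, and the sample-size condition $n\ge 1+4\kappa^{-2}\log(3/\delta)$ forces $\abs{\hat p_+-p}\le\kappa/2$, hence $\hat p_\pm\ge\kappa/2$, so the division is harmless. A union bound over the three events (positive $U$-process, negative $U$-process, and count fluctuation), each at level $\delta/3$, produces the $\log(3/\delta)$ appearing in $\Phi_{n,\delta}$, while bounding $\hat p_\pm^{-1}$ in terms of $\kappa^{-1}$ and using $R^\pm(S)\le 1$ accounts for the $\kappa^{-1}$ and $\kappa^{-2}$ factors. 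The outcome is that, with probability at least $1-\delta$, $\sup_{S\in\Scal_0}\abs{\widehat{R}^\pm_n(S)-R^\pm(S)}$ is at most the stated $\Phi_{n,\delta}$; this is precisely why $\Phi_{n,\delta}$ is calibrated to the maximal deviation.

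For ingredient (ii) I would work on the event where both deviation bounds hold. Let $\bar S\in\Scal_0$ be a (near-)maximizer of $R^+$ over $\{S\in\Scal_0:\,R^-(S)\le\alpha\}$, so that $R^+(\bar S)=\sup_{S\in\Scal_0:\,R^-(S)\le\alpha}R^+(S)$ and $R^-(\bar S)\le\alpha$. The negative-risk deviation bound shows $\bar S$ is feasible for the empirical program \eqref{simlearn}, since $\widehat{R}^-_n(\bar S)\le R^-(\bar S)+\Phi\le\alpha+\Phi$; by optimality of $\hat S_n$ this gives $\widehat{R}^+_n(\hat S_n)\ge\widehat{R}^+_n(\bar S)$. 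Chaining the two positive-risk deviations yields $R^+(\hat S_n)\ge\widehat{R}^+_n(\hat S_n)-\Phi\ge\widehat{R}^+_n(\bar S)-\Phi\ge R^+(\bar S)-2\Phi$, and substituting the Neyman--Pearson identity $\roc_{S^*}(\alpha)=\sup_{S:\X^2\to[0,1],\,R^-(S)\le\alpha}R^+(S)=R^+(S^*)$ (the optimal value of \eqref{eq:point_opt}) turns this into \eqref{eq1} after adding and subtracting $\roc_{S^*}(\alpha)$, the bracketed term being exactly the approximation error of $\Scal_0$. For \eqref{eq2}, I would simply combine the empirical constraint $\widehat{R}^-_n(\hat S_n)\le\alpha+\Phi$ with the negative-risk deviation to get $R^-(\hat S_n)\le\widehat{R}^-_n(\hat S_n)+\Phi\le\alpha+2\Phi$.

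Both conclusions thus emerge with the deviation appearing with its natural factor from the two-sided sandwich; reconciling these with the exact coefficients in \eqref{eq1}--\eqref{eq2} is a matter of calibrating $\Phi_{n,\delta}$ against the per-side maximal deviation and allocating the confidence budget (the $\delta/2$ split together with the $\log(3/\delta)$ three-event union), which is routine once the bound of ingredient (i) is in hand. The genuinely delicate point, and the place where the hypotheses $\kappa\le\sum_k p_k^2\le 1-\kappa$ and the sample-size lower bound are used, is converting the $U$-process concentration of \cref{lem:Ubounds} into a bound on the ratio-type empirical risks while keeping the random normalizing counts bounded away from zero.
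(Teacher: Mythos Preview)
Your proposal is correct and follows essentially the same route as the paper: you split the ratio $\widehat R_n^\pm-R^\pm$ into a $U$-process fluctuation term (handled by Lemma~\ref{lem:Ubounds}) and a random-normalization term $|\hat p_\pm-p|$ (handled by a Hoeffding/Serfling bound for $U$-statistics, with the sample-size condition and the $\kappa$-assumption ensuring $\hat p_\pm$ stays bounded away from $0$), and then feed the resulting uniform deviation into the deterministic constrained-ERM comparison from \citet{Clemencon2010}. The only cosmetic difference is the algebraic split: the paper writes $\widehat R_n^+-R^+=(\hat p_+^{-1}-p^{-1})U_n^+ + p^{-1}(U_n^+-\E[U_n^+])$, factoring the fluctuation by the deterministic $p^{-1}$, whereas you factor everything by the random $\hat p_+^{-1}$; both give the same order and the same $\kappa^{-1},\kappa^{-2}$ dependence.
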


\begin{remark}{\sc (On bias and model selection)} We point out that the last term on the right hand side of \eqref{eq1} should be interpreted as the bias of the statistical learning problem \eqref{simlearn}, which depends on the richness of class $\mathcal{S}_0$. This term vanishes when $\mathbb{I}\{ (x,x')\in \mathcal{R}^*_{\alpha} \}$ belongs to $\mathcal{S}_0$. Choosing a class yielding a similarity rule of highest true positive rate with large probability can be tackled by means of classical model selection techniques, based on resampling methods or complexity penalization (note that oracle inequalities can be straightforwardly derived from the same analysis). 
\end{remark}

Except for the minor condition stipulating that the probability of occurrence of ``positive pairs'' $\sum_{k=1}^Kp_k^2$ stays bounded away from $0$ and $1$, the generalization bound stated in Theorem~\ref{ccl-slow-rates} holds whatever the probability distribution of $(X,Y)$. 
Beyond such universal results, we investigate situations where rates faster than $O(1/\sqrt{n})$ can be achieved by solutions of \eqref{simlearn}. Such fast rates results exist for binary classification under the so-called Mammen-Tsybakov noise condition, see \textit{e.g.} \citet{Bousquet2004} for details. By means of a variant of the Bernstein inequality for $U$-statistics, we can establish fast rate bounds under the following condition on the data distribution.
\begin{noise-assumption}
	There exist a constant $c$ and $a \in [0,1]$ such that, almost surely, 
\begin{align*}
\E_{X'} \big [ \abs{\eta(X,X')-Q^*_\alpha}^{-a} \big ] \le c. 
\end{align*}\label{noise-ass-weaker}
\end{noise-assumption}

This noise condition is similar to that introduced by \citet{Mammen1} for the binary classification framework, except that the threshold $1/2$ is replaced here by the conditional quantile $Q^*_{\alpha}$. It characterizes ``nice'' distributions for the problem of $\roc$ optimization at point $\alpha$: it essentially ensures that the pairwise posterior probability is bounded away from $Q^*_{\alpha}$ with high probability. Under the assumption, we can derive the following fast learning rates.
\begin{theorem}
Suppose that the assumptions of Theorem \ref{ccl-slow-rates} are satisfied, that condition NA holds true and that the optimal similarity rule $S^*_{\alpha}(x,x')=\mathbb{I}\{(x,x')\in \mathcal{R}^*_{\alpha}  \}$ belongs to $\mathcal{S}_0$. Fix $\delta > 0 $. Then, there exists a constant $C'$, depending on $\delta$, $\kappa$, $Q^*_{\alpha}$, $a$, $c$ and $V$ such that, with
probability at least $1-\delta$,
\begin{multline*}
\roc_{S^*}(\alpha) -R^+(\hat{S}_n)\le  C'n^{-(2+a)/4}, \\
\text{and} \quad R^-(\hat{S}_n) \le \alpha + 2 \Phi_{n,\delta/2}.
\end{multline*}
\label{ccl-fast-rates}
\end{theorem}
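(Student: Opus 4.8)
The plan is to upgrade the universal (slow-rate) argument of Theorem~\ref{ccl-slow-rates} by exploiting the noise condition NA to obtain sharper control of the relevant deviations. The slow rate of order $O(1/\sqrt{n})$ comes from bounding $\roc_{S^*}(\alpha) - R^+(\hat{S}_n)$ by the uniform deviation $\sup_{S \in \mathcal{S}_0} |\widehat{U}^+_n(S) - \mathbb{E}[\widehat{U}^+_n(S)]|$, which by Lemma~\ref{lem:Ubounds} is $O(\sqrt{V/n})$. The key to fast rates is that, because $S^*_\alpha \in \mathcal{S}_0$ and the problem is now about the \emph{excess} positive risk relative to the optimum, one can relate the excess risk to a \emph{variance} term that is itself controlled by the excess risk. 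The plan is to first establish a variance-control lemma: under NA, for the relevant kernel differences $h_+$ evaluated at $S$ versus $S^*_\alpha$, the variance $\mathrm{Var}(h_+(\cdot) - h^*_+(\cdot))$ is bounded by a power of the excess risk, i.e. there is a relation of the form $\mathrm{Var} \lesssim (R^+(S^*_\alpha) - R^+(S))^{\theta}$ with $\theta = a/(1+a)$ or a closely related exponent dictated by the margin exponent $a$.

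Concretely, I would first translate NA into a bound linking the symmetric difference measure $\mathbb{P}\{(X,X') \in \mathcal{R}^*_\alpha \triangle \mathcal{R}_S\}$ (for a superlevel set $\mathcal{R}_S$) to the excess positive risk. The integral $\int_{\mathcal{R}^*_\alpha \triangle \mathcal{R}_S} |\eta(x,x') - Q^*_\alpha| \, d\mu(x)d\mu(x')$ controls the excess risk from below, while NA (which says $\eta$ cannot concentrate too close to $Q^*_\alpha$) lets me lower-bound that integral by a power of the measure of the symmetric difference. This is exactly the Mammen--Tsybakov device adapted to the pairwise, quantile-thresholded setting: the threshold $1/2$ of binary classification is replaced by $Q^*_\alpha$. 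From this I extract that $\mathrm{Var}$ of the kernel difference is at most a constant times $(\text{excess risk})^{(1+a)^{-1}}$ or the appropriate exponent, which is the heart of the argument.

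Next I would invoke a Bernstein-type concentration inequality for $U$-statistics / $U$-processes (the variant alluded to in the paper, in the spirit of \citet{Clemencon08Ranking}) to obtain a bound on $\sup_{S \in \mathcal{S}_0} |(\widehat{U}^+_n(S) - \widehat{U}^+_n(S^*_\alpha)) - (\mathbb{E}[\widehat{U}^+_n(S)] - \mathbb{E}[\widehat{U}^+_n(S^*_\alpha)])|$ in which the dominant term scales with the square root of the variance times $\sqrt{V/n}$ rather than with $\sqrt{V/n}$ alone. Substituting the variance-control bound yields a self-bounding inequality of the form $\text{(excess risk)} \lesssim \sqrt{(\text{excess risk})^{\theta} \cdot V/n} + V/n$. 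Solving this fixed-point inequality for the excess risk produces the advertised rate $n^{-(2+a)/4}$, and carrying the same uniform deviation through the constraint $\widehat{R}^-_n(S) \le \alpha + \Phi$ gives the feasibility conclusion $R^-(\hat{S}_n) \le \alpha + 2\Phi_{n,\delta/2}$ (the extra factor absorbing the Bernstein remainder into a second tolerance budget).

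The main obstacle I expect is the variance-control step and its interaction with the $U$-statistic structure. Unlike the i.i.d. binary classification case, the kernel $h_+$ is a function of \emph{two} independent samples, so the relevant "variance" is really the variance of the Hájek projection (the first-order term in the Hoeffding decomposition), and I must verify that NA — which is stated as a conditional expectation over a single $X'$ given $X$ — genuinely controls that projection variance rather than merely the full kernel variance. Threading the margin exponent $a$ correctly through the Hoeffding decomposition, and ensuring the constrained (rather than unconstrained) formulation does not introduce an uncontrolled bias from the tolerance $\Phi$ near the constraint boundary, are the delicate points; the remaining algebra (solving the self-bounding inequality, tracking constants depending on $\delta, \kappa, Q^*_\alpha, a, c, V$) is routine by comparison.
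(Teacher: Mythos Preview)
Your overall architecture matches the paper's: use the Hoeffding decomposition of the $U$-statistic $\Delta_n(S)=(U_n^+(S)-\E U_n^+(S))-(U_n^+(S^*)-\E U_n^+(S^*))$, control the degenerate part at rate $O(\log n/n)$, bound the variance of the H\'ajek projection $q_S(X,Y)$ via NA, apply Bernstein, and solve a fixed-point inequality. You also correctly anticipate that NA, being a conditional bound in $X'$ given $X$, is precisely what controls the projection variance.

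However, there is a concrete gap in your variance-control step that would make your computation produce the wrong rate. The integral $\int_{\mathcal{R}^*_\alpha\triangle\mathcal{R}_S}|\eta(x,x')-Q^*_\alpha|\,d\mu\,d\mu$ is \emph{not} controlled by $R^+(S^*)-R^+(S)$ alone. A direct calculation (expand $\eta=p\cdot\text{(cond.\ on }Z=+1)+(1-p)\cdot\text{(cond.\ on }Z=-1)$ and use the definition of $\mathcal{R}^*_\alpha$) gives the exact identity
\[
\int_0^1\!\!\E\big[|\eta(X,X')-Q^*_\alpha|\,\mathbb{I}\{(X,X')\in\mathcal{R}_{S,u}\triangle\mathcal{R}^*_\alpha\}\big]\,du
=(1-Q^*_\alpha)\,p\,\big(R^+(S^*)-R^+(S)\big)+Q^*_\alpha(1-p)\big(R^-(S)-R^-(S^*)\big),
\]
so the variance bound (with exponent $a$, not $a/(1+a)$) reads
\[
\text{Var}\big(q_S(X,Y)\big)\le c\Big[(1-Q^*_\alpha)p\big(R^+(S^*)-R^+(S)\big)+(1-p)Q^*_\alpha\big(R^-(S)-R^-(S^*)\big)\Big]^{a}.
\]
The second summand is not an artifact: it is the term that actually determines the rate. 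From the slow-rate analysis you already know $R^-(\hat S_n)-R^-(S^*)\le R^-(\hat S_n)-\alpha\le 2\Phi_{n,\delta/2}\asymp n^{-1/2}$, so after Bernstein the dominant contribution is $\sqrt{\Phi^{a}/n}\asymp n^{-(2+a)/4}$, which is exactly the stated rate. By contrast, your self-bounding inequality $x\lesssim\sqrt{x^{\theta}/n}$ (with only the positive excess on the right) would solve to $x\lesssim n^{-1/(2-a)}$, a strictly faster rate that the theorem does \emph{not} claim and that the constrained structure does not deliver. In short: the ``uncontrolled bias from the tolerance $\Phi$ near the constraint boundary'' you flag as a nuisance is in fact the mechanism producing the exponent $-(2+a)/4$; you must keep $R^-(S)-R^-(S^*)$ inside the variance bound and plug in the slow-rate control of the constraint, rather than try to close a pure fixed point in $R^+(S^*)-R^+(\hat S_n)$.
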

\begin{remark}{\sc (On the NA condition)}
The noise condition is automatically fulfilled for any $a \in (0,1)$ when, for almost every point $x$ with respect to the 
measure induced by $X$, $\eta(x,X')$ has an absolutely continuous distribution and bounded density. 
This assumption means that the problem of ranking by similarity to an instance $x$ is not too hard 
for any value of $x$, see supplementary material for more details.
\end{remark}
The proof is based on the same argument as that of \citet[][Theorem 12 therein]{Clemencon2010}, except that it involves a sharp control of the fluctuations of the $U$-statistic estimates of the true positive rate excess $\roc_{S^*}(\alpha) -R^+(S)$ over the class $\mathcal{S}_0$. The reduced variance property of $U$-statistics plays a crucial role in the analysis, which essentially relies on the Hoeffding decomposition \citep[see][]{Hoeffding48}. Technical details can be found in the supplementary material.



\section{Scalability by Sampling Approximations}
\label{sec:scalability}

In the previous section, we analyzed the learning rates achieved by a minimizer of the empirical problem \eqref{simlearn}. In the large-scale setting, solving this problem can be computationally costly due to the very large number of training pairs. In particular, the positive and negative empirical risks $\widehat{R}_n^+(S)$ and $\widehat{R}_n^-(S)$ are sums over respectively $\sum_{k=1}^K n_k(n_k-1)/2$ and $\sum_{k<l} n_k n_l$ pairs.
We focus here more specifically on the setting where we have a large number of (rather balanced) classes, as in our biometric identification motivating example where a class corresponds to an identity. In this regime, we are facing a highly imbalanced problem since the number of negative pairs becomes overwhelmingly large compared to the number of positive pairs. For instance, even for the MNIST dataset where the number of classes is only $K=10$ and $n_k=6000$, there are already 10 times more negative pairs than positive pairs.

A natural strategy, often used by metric learning practitioners \citep[see e.g.,][]{Babenko2009a,Wu2013a,Xie2015a}, is to drastically subsample the negative pairs while keeping all positive pairs.
In this section, we shed light on this popular practice by analyzing the effect of subsampling (conditionally upon the data) the negative pairs onto the generalization performance.

A simple approach consists in replacing the empirical negative risk $\widehat{R}_n^-(S)$ by the following approximation:
\begin{align*}
\bar{R}_{B}^-(S) &:= \frac{1}{B} \sum_{(i, j) \in \mathcal{P}_B} S(X_{i}, X_j),
\end{align*}
where $\mathcal{P}_B$ is a set of cardinality $B$ built by sampling with replacement in the set of negative training pairs $\Lambda_{\mathcal{P}} =  \left \{ (i,j) \, | \, i,j\in\{1,\dots,n\}; Y_i \ne Y_j \right \}$. 
Conditioned upon the $n_k$'s,
$\bar{R}_{B}^-(S)$ can be viewed as an \emph{incomplete} version of the $U$-statistic $\widehat{R}_n^-(S)$ consisting of $B$ pairs \citep{Blom76,Lee90}. 

Despite the simplicity of the above approximation, we also consider an alternative sampling strategy, which consists in sampling a number $B$ of $K$-tuples containing one random sample of each class. Formally, this corresponds to the following approximation:
\begin{align*}
\widetilde{R}_{B}^-(S) &:= \frac{1}{B} \sum_{(i_1, \dots i_K) \in \mathcal{T}_B} h_S(X_{i_1}, \dots , X_{i_K}),
\end{align*}
where $h_S(X_{1}, \dots , X_{K}) = \frac{1}{n_-} \sum_{k<l} n_k n_l S(X_k,X_l)$ and
 $\mathcal{T}_B$ is a set of cardinality $B$ built by sampling with replacement in the set of $K$-tuples
$\Lambda_{\mathcal{T}} = \left \{ (i_1, \dots, i_K ) \, | \, i_k \in\{ 1, \dots, n_k \} ; k = 1, \dots, K \right \}$. $\widetilde{R}_{B}^-(S)$ is also an incomplete version of $\widehat{R}_n^-(S)$, with the alternative view of $\widehat{R}_n^-(S)$ as a \emph{generalized} $K$-sample $U$-statistic  \citep{Lee90} of degree $(1,\dots,1)$ and kernel $h_S$, see supplementary material for a full definition. Note that $\widetilde{R}_{B}^-(S)$ contains $BK(K-1)/2$ pairs, balanced across all class pairs.

$\bar{R}_{B}^-(S)$ and $\widetilde{R}_{B}^-(S)$ are both unbiased estimates of $\widehat{R}_n^-(S)$, but their variances are different and one approximation might be better than the other in some regimes.
The following result provides expressions for the variances of both incomplete estimators for a fixed budget of $B_0$ sampled pairs, 
under a standard asymptotic framework.

\begin{proposition}
\label{prop:var}
Let $B_0$ be the number of pairs sampled in both schemes, and denote $V_n = \text{Var} ( \hat{R}_n^-(S)  )$. 
When $B_0/n \to 0$, $n\to \infty$ and for all $k\in\{1,\dots, K \}, \; n_k/n \to p_k > 0$, we have:
\begin{align*}
 \text{Var} ( \widetilde{R}_B^-(S)  )  -  V_n   & \sim  
 \frac{ K(K-1)}{2B_0}\text{Var}  ( h_S  ( X^{(1)}, \dots , X^{(k)}  ) ), \\
  \text{Var} (\bar{R}_B^-(S) )  - V_n & \sim  
  B_0^{-1}  \text{Var}  ( S(X,X') \, | \, Y \ne Y'  ),   \\
\end{align*}
where $X^{(k)}$ denotes $X \, | \, Y=k$ for all $k \in \{ 1 , \dots , K \}$.
\end{proposition}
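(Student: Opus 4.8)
The plan is to treat both $\bar{R}_B^-(S)$ and $\widetilde{R}_B^-(S)$ as sample means of $B$ terms drawn \emph{with replacement}, and to analyse them by conditioning on the data $\mathcal{D}_n$ (equivalently on the class counts $n_1,\dots,n_K$). The key structural observation is that each sampled term is conditionally unbiased for the complete statistic $\widehat{R}_n^-(S)$. For $\bar{R}_B^-(S)$ this is immediate: a uniformly drawn negative pair $(i,j)\in\Lambda_{\mathcal{P}}$ satisfies $\E[S(X_i,X_j)\mid\mathcal{D}_n]=\widehat{R}_n^-(S)$. For $\widetilde{R}_B^-(S)$ it follows from the generalized $K$-sample representation: averaging $h_S$ over all tuples in $\Lambda_{\mathcal{T}}$ reproduces $\widehat{R}_n^-(S)$, so a uniformly drawn $K$-tuple also has conditional mean $\widehat{R}_n^-(S)$. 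Hence both incomplete statistics share the same complete version, which is why a single $V_n=\mathrm{Var}(\widehat{R}_n^-(S))$ appears on both lines.

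First I would apply the law of total variance to $\hat{U}_B\in\{\bar{R}_B^-(S),\widetilde{R}_B^-(S)\}$. Since, given $\mathcal{D}_n$, the $B$ sampled terms are i.i.d.\ uniform, the conditional variance of their mean is $\tfrac{1}{B}$ times the conditional variance of one term, and the conditional mean is exactly $\widehat{R}_n^-(S)$. Writing $T$ for a single (unconditionally) sampled term, this yields the classical incomplete $U$-statistic identity
\begin{equation*}
\mathrm{Var}(\hat{U}_B)=\tfrac{1}{B}\,\mathrm{Var}(T)+\bigl(1-\tfrac{1}{B}\bigr)V_n,\qquad\text{so}\qquad \mathrm{Var}(\hat{U}_B)-V_n=\tfrac{1}{B}\bigl(\mathrm{Var}(T)-V_n\bigr).
\end{equation*}
Next I would express the per-scheme budget $B$ in terms of the common number $B_0$ of sampled \emph{pairs}: in $\bar{R}_B^-(S)$ each term is one pair so $B=B_0$, whereas in $\widetilde{R}_B^-(S)$ each sampled $K$-tuple contributes $K(K-1)/2$ pairs so $B=2B_0/(K(K-1))$. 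This substitution produces exactly the prefactors $1/B_0$ and $K(K-1)/(2B_0)$ in the statement.

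The remaining and most delicate step is to identify the limit of the single-term variance $\mathrm{Var}(T)$, which depends on $n$ through the finite-sample selection. For the pair scheme, the selected negative pair $(X_I,X_J)$ converges in distribution to $(X,X')\mid Y\neq Y'$ as $n\to\infty$, whence $\mathrm{Var}(T)\to\mathrm{Var}(S(X,X')\mid Y\neq Y')$; boundedness of $S$ gives the needed integrability. For the tuple scheme one must additionally invoke $n_k/n\to p_k$ twice over: the entries of a uniformly selected tuple converge to independent draws from the $\mu_k$'s, and the coefficients $n_kn_l/n_-$ of the kernel $h_S$ converge to $2p_kp_l/(1-\sum_m p_m^2)$, so that $h_S$ converges to its population analogue and $\mathrm{Var}(T)\to\mathrm{Var}(h_S(X^{(1)},\dots,X^{(K)}))$. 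I expect this joint convergence of the random coefficients and the empirical class distributions to be the main obstacle, requiring a law of large numbers for the underlying ratios of $U$-statistics together with uniform integrability.

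Finally I would combine the pieces. Since $V_n=O(1/n)$ by the reduced-variance property of $U$-statistics, the regime $B_0/n\to 0$ places us in the subsampling-dominated regime where the excess term $\tfrac{1}{B}\bigl(\mathrm{Var}(T)-V_n\bigr)=\Theta(1/B_0)$ dominates $V_n$ and the $o(1)$ remainders are negligible. Substituting the limits of $\mathrm{Var}(T)$ and the budget conversions into $\mathrm{Var}(\hat{U}_B)-V_n=\tfrac{1}{B}(\mathrm{Var}(T)-V_n)$ then yields the two announced asymptotic equivalences.
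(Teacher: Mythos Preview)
Your approach is correct and is essentially the paper's: both rest on the incomplete $U$-statistic variance identity $\mathrm{Var}(\hat{U}_B)=(1-\tfrac{1}{B})V_n+\tfrac{1}{B}\mathrm{Var}(T)$ (the paper quotes it from the literature on incomplete $U$-statistics, you derive it via the law of total variance), the budget conversion $B_0=BK(K-1)/2$ for tuples, and $V_n=O(1/n)$. One small simplification: for the tuple scheme, conditionally on the $n_k$'s a single sampled term is \emph{exactly} distributed as $h_S(X^{(1)},\dots,X^{(K)})$ (the random tuple index is irrelevant by exchangeability within each class), so $\mathrm{Var}(T)=\mathrm{Var}\bigl(h_S(X^{(1)},\dots,X^{(K)})\bigr)$ is an identity rather than a limit and the convergence of the kernel coefficients you flag as the ``main obstacle'' is not actually needed there.
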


Proposition~\ref{prop:var} states that if the variance of similarity scores on 
the negative pairs is high compared to the variance of a weighted average of similarity scores on all types 
$(k,l)$ of negative pairs, then one should prefer tuple-based sampling (otherwise pair-based sampling is better).
As an example, consider the case where the similarity scores on the negative pairs constructed from classes $(k_0,l_0)$
are consistently higher than for other negative pairs. These high similarity pairs will not be sampled very often by the pair-based sampling method, in contrast to the tuple-based approach. In that scenario, the variance of $S(X,X') \,| \, Y\ne Y'$ 
is high while the variance of $ h_S \left ( X^{(1)}, \dots , X^{(k)} \right )$ is low, and the tuple-based method 
should be preferred. In practice, the properties of the data should guide the choice of the sampling approach.


We now analyze the effect of sampling on the performance of the empirical risk minimizer. We consider tuple-based sampling (results of the same order can be obtained for pair-based sampling). Let $\tilde{S}_B$ be the minimizer of the following simpler empirical problem:
\begin{align}
\argmax_{S\in\mathcal{S}_0} \widehat{R}_n^+(S)\quad \text{subject to}  
\; \widetilde{R}_B^-(S) \le \alpha + \Phi_{n, \delta, B}.
\label{emp-incomplete-simlearn}
\end{align}

We have the following theorem, based on combining Theorem~\ref{ccl-slow-rates} with a result bounding the maximal deviation between $\widehat{R}_n^-(S)$ and its incomplete version $\widetilde{R}_B^-(S)$, see \citet{JMLRincompleteUstats}.

\begin{theorem}
\label{thm:incomplete}
Let $N = \min_{1 \le k \le K} n_k$ and $\alpha \in (0,1)$, assume that $S^* \in \mathcal{S}_0$ and that $\mathcal{S}_0$ is a VC-major class of dimension $V$. For all $(\delta,n, B) \in (0,1) \times \N^* \times \N^*$, set
\begin{align*}
\Phi_{n,\delta, B} = 4 \sqrt{ \frac{V \log(1+N)}{N}} + \sqrt{\frac{\log(2/\delta)}{N}} \\+ 
\sqrt{ 2\frac{V \log (1 + \prod_{k=1}^K n_k) + \log (4/\delta)}{B} }.
\end{align*}
Then we have simultaneously with probability at least $1-\delta$,
\begin{align*}
R^+(\tilde{S}_B) \ge R_*^+ - 2 \Phi_{n,\delta, B} \quad \text{and} \quad 
R^-(\tilde{S}_B) \le \alpha + 2 \Phi_{n, \delta, B}.
\end{align*}
\end{theorem}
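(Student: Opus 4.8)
The plan is to reduce the statement to two uniform deviation bounds over the class $\mathcal{S}_0$ and then reproduce, essentially verbatim, the Neyman--Pearson feasibility argument underlying Theorem~\ref{ccl-slow-rates}. Write $S^*_\alpha = \mathbb{I}\{(x,x')\in\mathcal{R}^*_\alpha\}$ for the optimal rule, which by assumption lies in $\mathcal{S}_0$ and which satisfies $R^-(S^*_\alpha)=\alpha$ and $R^+(S^*_\alpha)=R_*^+=\roc_{S^*}(\alpha)$ by Neyman--Pearson (the constraint is tight at the optimum). The entire proof will hinge on showing that, on a single event of probability at least $1-\delta$, the fluctuation of $\widetilde{R}_B^-$ around the true negative risk $R^-$ and the fluctuation of $\widehat{R}_n^+$ around $R^+$ are both controlled by $\Phi_{n,\delta,B}$, uniformly over $\mathcal{S}_0$.

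To this end I would introduce the three suprema $\Delta^+_n = \sup_{S\in\mathcal{S}_0}|\widehat{R}_n^+(S) - R^+(S)|$, $\Delta^-_n = \sup_{S\in\mathcal{S}_0}|\widehat{R}_n^-(S) - R^-(S)|$ and $\Delta_{B} = \sup_{S\in\mathcal{S}_0}|\widetilde{R}_B^-(S) - \widehat{R}_n^-(S)|$, and bound them separately. The first two are fluctuations of complete (generalized $K$-sample) $U$-processes over a VC-major class: reading $\widehat{R}_n^-$ as a $K$-sample $U$-statistic of degree $(1,\dots,1)$ whose effective sample size is $N=\min_k n_k$, the VC-type bound of \citet{Clemencon08Ranking} used in Lemma~\ref{lem:Ubounds} yields $\Delta^+_n \vee \Delta^-_n \le 4\sqrt{V\log(1+N)/N} + \sqrt{\log(2/\delta)/N}$ with probability at least $1-\delta/2$, i.e. the first two summands of $\Phi_{n,\delta,B}$. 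The third summand is the genuinely new ingredient: conditionally on $\mathcal{D}_n$, $\widetilde{R}_B^-$ is an incomplete $U$-statistic obtained by sampling $B$ tuples with replacement from the $\prod_k n_k$ available ones, so the maximal deviation inequality for incomplete $U$-processes of \citet{JMLRincompleteUstats} gives $\Delta_B \le \sqrt{2(V\log(1+\prod_{k=1}^K n_k)+\log(4/\delta))/B}$ with (conditional, hence unconditional) probability at least $1-\delta/2$. A union bound over these two events yields, with probability at least $1-\delta$, the master inequality $\sup_{S\in\mathcal{S}_0}|\widetilde{R}_B^-(S)-R^-(S)| \le \Delta^-_n + \Delta_B \le \Phi_{n,\delta,B}$ together with $\Delta^+_n \le \Phi_{n,\delta,B}$.

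On this event the conclusion follows by the standard two-step argument. First, feasibility of $S^*_\alpha$: since $R^-(S^*_\alpha)=\alpha$, the master inequality gives $\widetilde{R}_B^-(S^*_\alpha)\le \alpha + \Phi_{n,\delta,B}$, so $S^*_\alpha$ is admissible for \eqref{emp-incomplete-simlearn} and hence $\widehat{R}_n^+(\tilde{S}_B)\ge \widehat{R}_n^+(S^*_\alpha)$ by optimality of $\tilde{S}_B$. Transferring both sides to the true positive risk through $\Delta^+_n$ gives $R^+(\tilde{S}_B)\ge \widehat{R}_n^+(\tilde{S}_B)-\Delta^+_n \ge \widehat{R}_n^+(S^*_\alpha)-\Delta^+_n \ge R^+(S^*_\alpha)-2\Delta^+_n \ge R_*^+ - 2\Phi_{n,\delta,B}$, which is the first claim, the factor $2$ coming from the two transfers (for $\tilde{S}_B$ and for $S^*_\alpha$). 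Second, for the negative-risk control, admissibility of $\tilde{S}_B$ gives $\widetilde{R}_B^-(\tilde{S}_B)\le \alpha+\Phi_{n,\delta,B}$, and one further application of the master inequality yields $R^-(\tilde{S}_B)\le \widetilde{R}_B^-(\tilde{S}_B)+\Phi_{n,\delta,B}\le \alpha+2\Phi_{n,\delta,B}$, the second claim.

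I expect the main obstacle to be the uniform incomplete-approximation bound on $\Delta_B$. Unlike the concentration of a single incomplete $U$-statistic, controlling $\widetilde{R}_B^-(S)-\widehat{R}_n^-(S)$ simultaneously over the whole class $\mathcal{S}_0$ requires a maximal inequality conditional on the data, in which the combinatorial richness of the sampling space enters through the $\log(1+\prod_{k=1}^K n_k)$ covering term rather than through the far smaller $N$; care is also needed to merge the data-level randomness (governing $\Delta^+_n,\Delta^-_n$) and the conditional sampling-level randomness (governing $\Delta_B$) into one high-probability event without inflating the confidence budget. A secondary technical point is to justify that the complete negative risk genuinely behaves like a $K$-sample $U$-statistic of effective sample size $N=\min_k n_k$, so that the first two summands of $\Phi_{n,\delta,B}$ legitimately dominate both $\Delta^+_n$ and $\Delta^-_n$; this is where the balancedness encoded by $\kappa\le\sum_k p_k^2\le 1-\kappa$ in Theorem~\ref{ccl-slow-rates} plays its role.
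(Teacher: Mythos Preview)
Your proposal is correct and follows essentially the same route as the paper: decompose the deviation into the complete $U$-process part ($\Delta_n^+,\Delta_n^-$) and the incomplete-sampling part ($\Delta_B$), invoke the $K$-sample $U$-statistic view for $\widehat{R}_n^-$, apply the maximal inequality for incomplete $U$-processes from \citet{JMLRincompleteUstats} for $\Delta_B$, and finish with the feasibility argument of Theorem~\ref{ccl-slow-rates}.

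Two small clarifications relative to the paper. First, the paper makes the conditioning on the labels $(Y_i)$ explicit: once the $n_k$'s are fixed, $\widehat{R}_n^+$ is handled as a weighted average of $K$ independent one-sample degree-two $U$-statistics (one per class), not as a $K$-sample $U$-statistic as you write; this is why the bound on $\Delta_n^+$ also scales with $N=\min_k n_k$. Second, your closing remark about $\kappa$ is slightly off: because the analysis here is conditional on the labels, the constant $\kappa$ from Theorem~\ref{ccl-slow-rates} never enters---the effective sample size $N$ appears directly in $\Phi_{n,\delta,B}$ with no need for a lower bound on $\sum_k p_k^2$.
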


This result is very similar to Theorem~\ref{ccl-slow-rates}, with an additive error term in $O(\sqrt{\log n/B})$. Remarkably, this implies that it is sufficient to sample $B=O(n)$ tuples (hence only $O(nK^2)$ pairs) to preserve the $O(\sqrt{\log n/n})$ learning rate achieved when using all negative pairs.
This will be confirmed empirically in our numerical experiments.


\begin{remark}[Approximating the positive risk]
When needed, sampling-based techniques can also be used to approximate the empirical positive risk $\widehat{R}_n^+(S)$, with generalization results analogous to Theorem~\ref{thm:incomplete}. Details are left to the reader.
\end{remark}


\section{Illustrative Experiments}
\label{sec:experiments}

In this section, we present some experiments to illustrate our main results. We first illustrate how solving instances of Problem \eqref{simlearn} allows to optimize for specific points of the ROC curve.
We then provide some numerical evidence of the fast rates of Theorem~\ref{ccl-fast-rates}.
Finally, we illustrate our scalability results of Section~\ref{sec:scalability} by showing that dramatically subsampling the negative empirical risk leads to negligible loss in generalization performance.

\subsection{Pointwise ROC Optimization}

\begin{figure}[t]
\centering
\subfigure[Simulated data (stars are class centroids)]{\label{fig:simulated_data}
\includegraphics[width=0.43\columnwidth]{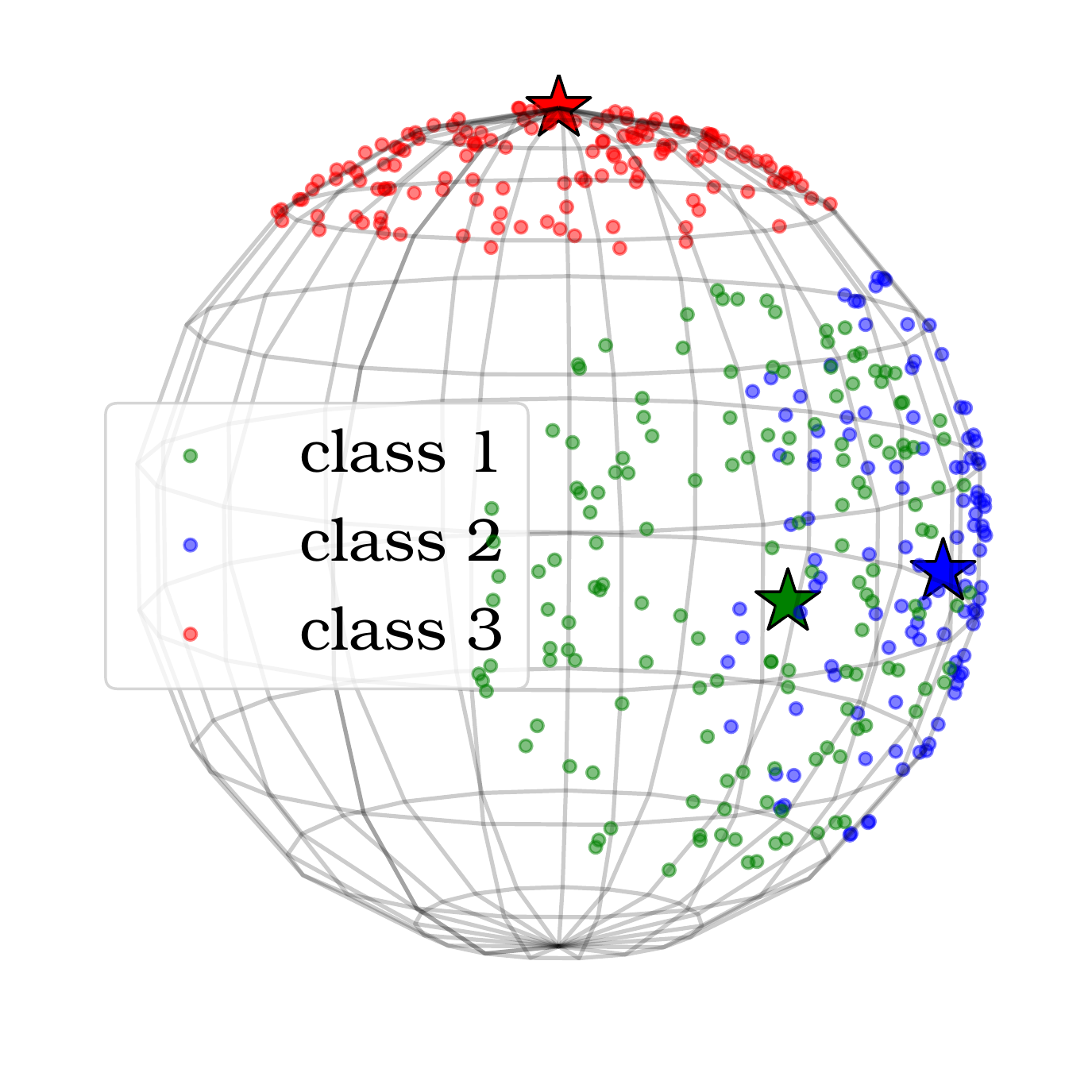}}\hspace{-.5cm}
\subfigure[ROC curves]{\label{fig:roc_curves_simdata}
\includegraphics[width=0.58\columnwidth]{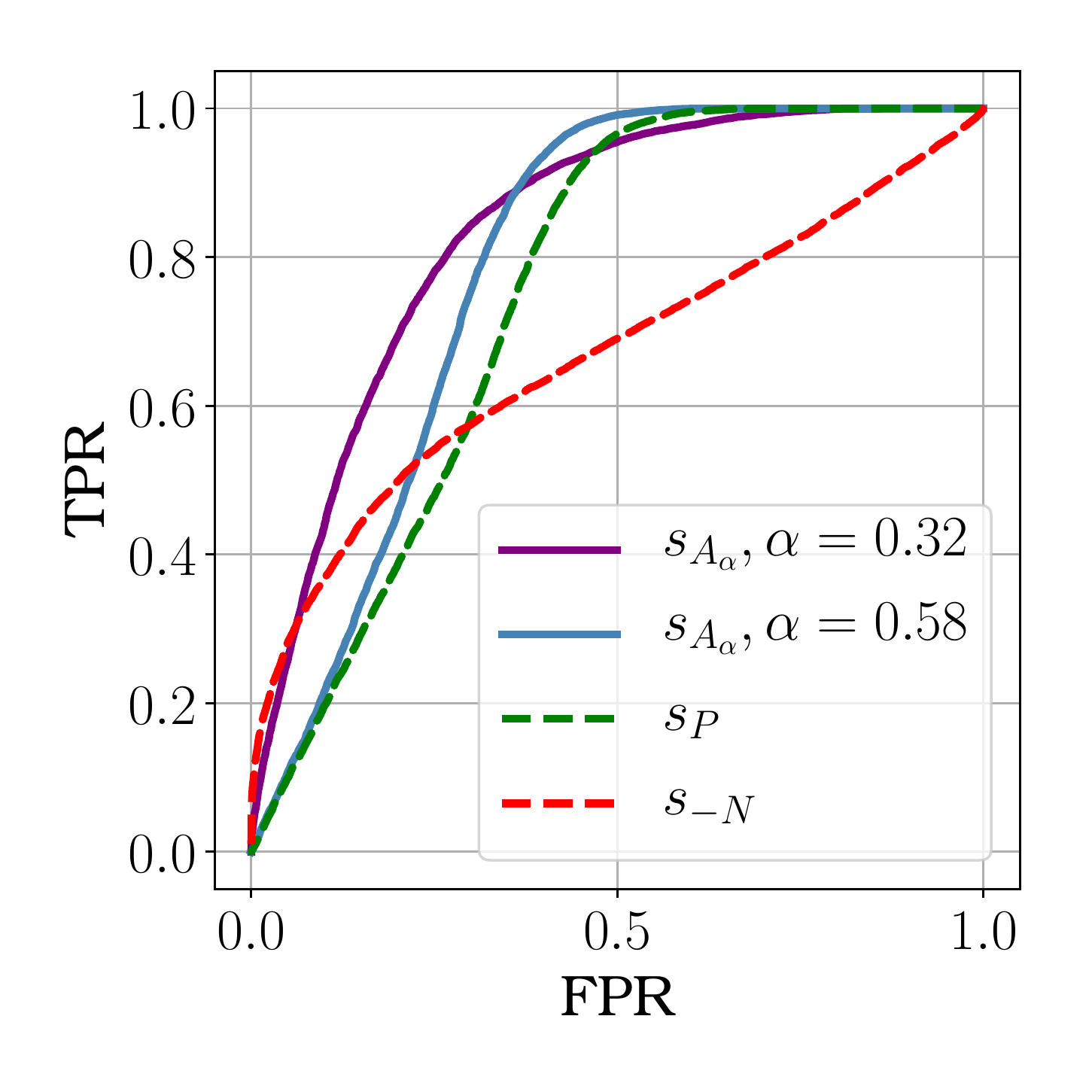}
}
\caption{Illustrative experiments for pointwise ROC optimization.}
\label{fig:simdata_exp}
\end{figure}

We illustrate on synthetic data that solving \eqref{simlearn} for different values of $\alpha$ can optimize for different regions of the ROC curve. Let $\mathcal{X}\subset \mathbb{R}^d$, and let $\mathcal{S}_0$ be the set of bilinear similarities
with norm-constrained matrices
\begin{align*}
\mathcal{S}_0 = \left \{ S_A : (x, x') \mapsto \frac{1}{2} \left ( 1  + x^\top A x' \right ) \; \big | \; 
\norm{A}_F^2 \le 1 \right \}, 
\end{align*}
where $\norm{A}_F^2 = \sum_{i,j=1}^d a_{ij}^2$. Note that when data is scaled ($\|x\|=1$ for all $x\in\X$), we have $S_A(x,x')\in[0, 1]$ for all $x,x'\in\X$ and all $S_A\in\mathcal{S}_0$.
In our simple experiment, we have $K=3$ classes and observations belong to the sphere in $\R^3$. Denoting by $\theta_{x,c_i}$ the angle between the element 
$x$ and the centroid $c_i$ of class $i$, we set for all $i \in \{ 1, 2, 3\}$, 
\begin{align*}
\mu_i (x) &\propto \I \left \{ \theta_{x,c_i} < \frac{\pi}{4}  \right \} , \quad  p_i = \frac{1}{3}
\end{align*}
 and $c_1 = \left( \cos(\pi/3), \sin(\pi/3) , 0 \right )$, $c_2 = e_2$, $c_3 = e_3$ with $e_i$ vectors of the standard basis of $\R^3$. 
 See Figure~\ref{fig:simulated_data} for a graphical representation of the data.

The solutions of the problem can be expressed in closed form using Lagrangian duality.
In particular, when the constraints are saturated, the solution $S_{A_\alpha}$ is an increasing transformation of $s_{P - \lambda_\alpha N}$ with 
\begin{align*}
P &= \frac{1}{2n_+} \sum_{1 \le i < j \le n} \I\left\{ Y_i = Y_j \right\} \cdot 
\left ( X_i X_j^\top  +  X_j  X_i^\top \right ), \\
N &= \frac{1}{2n_-} \sum_{1 \le i < j \le n} \I\left\{ Y_i \ne Y_j \right\} \cdot 
\left ( X_i X_j^\top  +  X_j  X_i^\top \right ),
\end{align*}
and $\lambda_{\alpha}$ is a positive Lagrange multiplier decreasing in $\alpha$, see supplementary material for details.
By varying $\alpha$, we trade-off between the information contained in the positive pairs 
($\alpha$ large, $\lambda_\alpha$ close to zero) and in the negative pairs 
($\alpha$ small, $\lambda_\alpha$ large), which indeed results in optimizing different areas of the ROC curve, see Figure~\ref{fig:roc_curves_simdata}.


\subsection{Fast Rates}
\label{experiments-fast-rates}

\begin{figure}[t]
\centering
\includegraphics[width=\columnwidth]{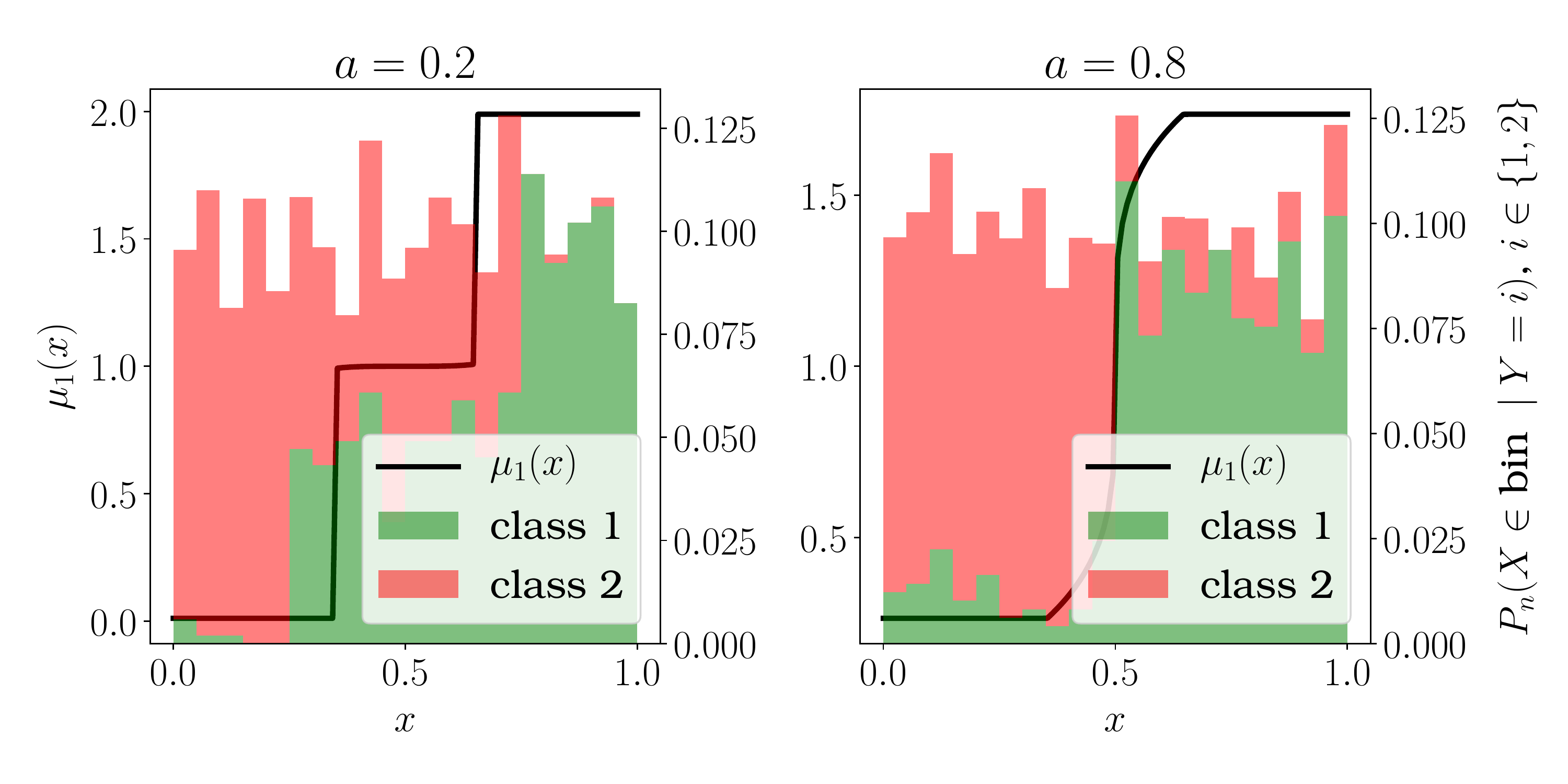}
\caption{Example distributions and $\mu_1$'s for $n=1000$ and two values of $a$.}
\label{fig:fast_example_distrib}
\end{figure}

\begin{figure}[t]
\centering
\includegraphics[width=0.75\columnwidth]{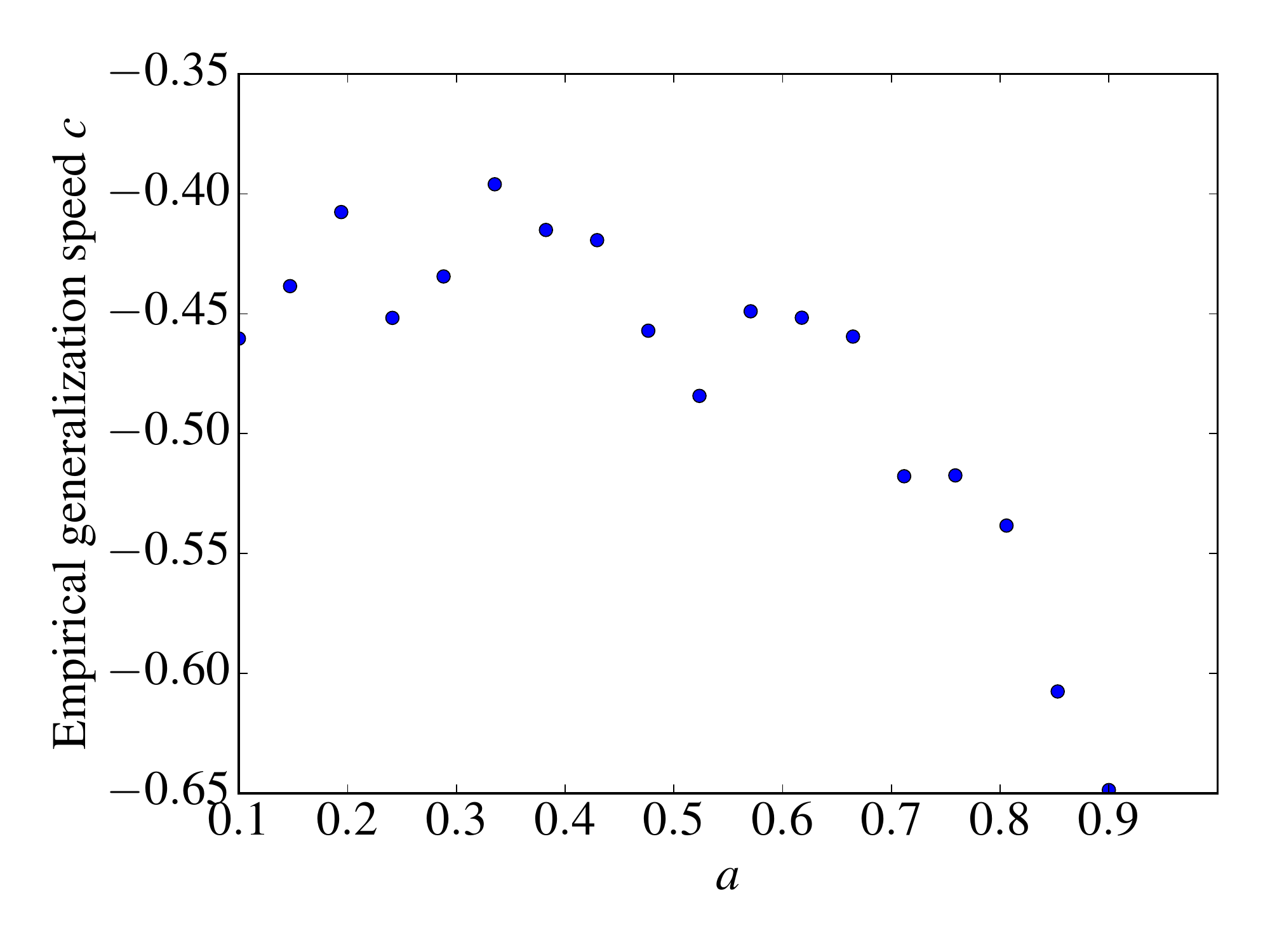}
\caption{Generalization speed for different values of $a$.}
\label{fig:fast_gen_speeds}
\end{figure}

\cref{ccl-fast-rates} shows that when the noise assumption NA is verified,
faster rates of generalization can be achieved. Showing the existence of fast rates experimentally
requires us to design a problem for which the $\eta$ satisfies NA, which is not trivial due to the pairwise nature of the involved quantities. We emphasize that such empirical evidence of fast rates is rarely found in the literature.
 
We put ourselves in a simple scenario where $\X = [0,1]$, $\mu = 1$, $K=2$ and $p_1=p_2=1/2$.
In that context, characterizing $\mu_1(dx)$ is sufficient to have a fully defined problem.
With $m \in (0,\frac{1}{2})$, $a \in (0,1)$
and $C \in (0,\frac{1}{2})$, we set
\begin{align*}
\mu_1(x) = 
\begin{cases}
2 C \quad &\text{if} \quad x \in [0,m], \\
1 - \abs{2x-1}^{(1-a)/a} \quad &\text{if} \quad x  \in ( m, 1/2] ,
\end{cases}
\end{align*}
where $C$ is chosen so that $Q^*_\alpha=1/2$ and $m$ is fixed in advance.
Since $\int \mu_1(dx) = 1$, we chose $\mu_1$ symmetric in $ \left (1/2, 1 \right )$ to satisfy that constraint. 
Figure~\ref{fig:fast_example_distrib} shows example distributions.

Given that $\mu = 1$, the noise assumption with $a$ close to $1$ requires that there are sharp variations of 
$\eta$ close to $Q^*_\alpha$. To induce the form of the function more easily, we fixed $Q^*_\alpha =1/2$, which
requires us to choose $\mu_1$ such that the value of the integral of $\eta$ is controlled while $\eta$ has the 
expected local property around $1/2$. 
 More details about the design of the experiment can be found in the supplementary material. 
 When $t$ is small enough, $\p \left ( \abs{\eta(X,X') -Q^*_\alpha } \le t \right )$ is of order 
 $-t^{\frac{a}{1-a}} \log(t)$. Due to the logarithm term in the noise condition, we expect that the generalization
 speeds to be slightly worse than $O(n^{-(2+a)/4})$.

 The family $\mathcal{S}_0$ is composed of indicators of sets, which are parameterized by $t \in (0,1)$ (see supplementary material for a graphical representation). Each set contains the pairs $(x,x')$ such that one of the supremum
  distances between $(x,x')$ and $(0,0)$ or $(1,1)$ is smaller than $t$, which writes
\begin{align*} 
 \left\{ x,x' \in \X \;  |  \;
 \min( \max(1-x,1-x') , \max(x,x') ) < t  \right\}.
\end{align*} 
The optimal set can thus always be identified, and $R^+(S)$ and $R^-(S)$ can be expressed
analytically for some $S \in \mathcal{S}_0$. The empirical problem \cref{simlearn} is always solved neglecting 
the tolerance parameter $\Phi$, i.e. setting $\Phi = 0$.

Figure~\ref{fig:fast_gen_speeds} shows experiments for the case $\alpha = 0.26$, $m = 0.35$ and $a\in[0.1,0.9]$. 
For some $a$, the empirical 90-quantile of $\roc_{S^*} ( \alpha ) - R^+(\hat{S}_n)$ is computed for different values of 
$n$ on 1000 experiments and its logarithm is fitted to $C_a \times \log(n) + D_a$ to get 
the empirical generalization speed $C_a$. There is a clear downward trend when $a$ increases, illustrating the fast 
rates in practice.

\begin{figure}[t]
\centering
\includegraphics[width=\columnwidth]{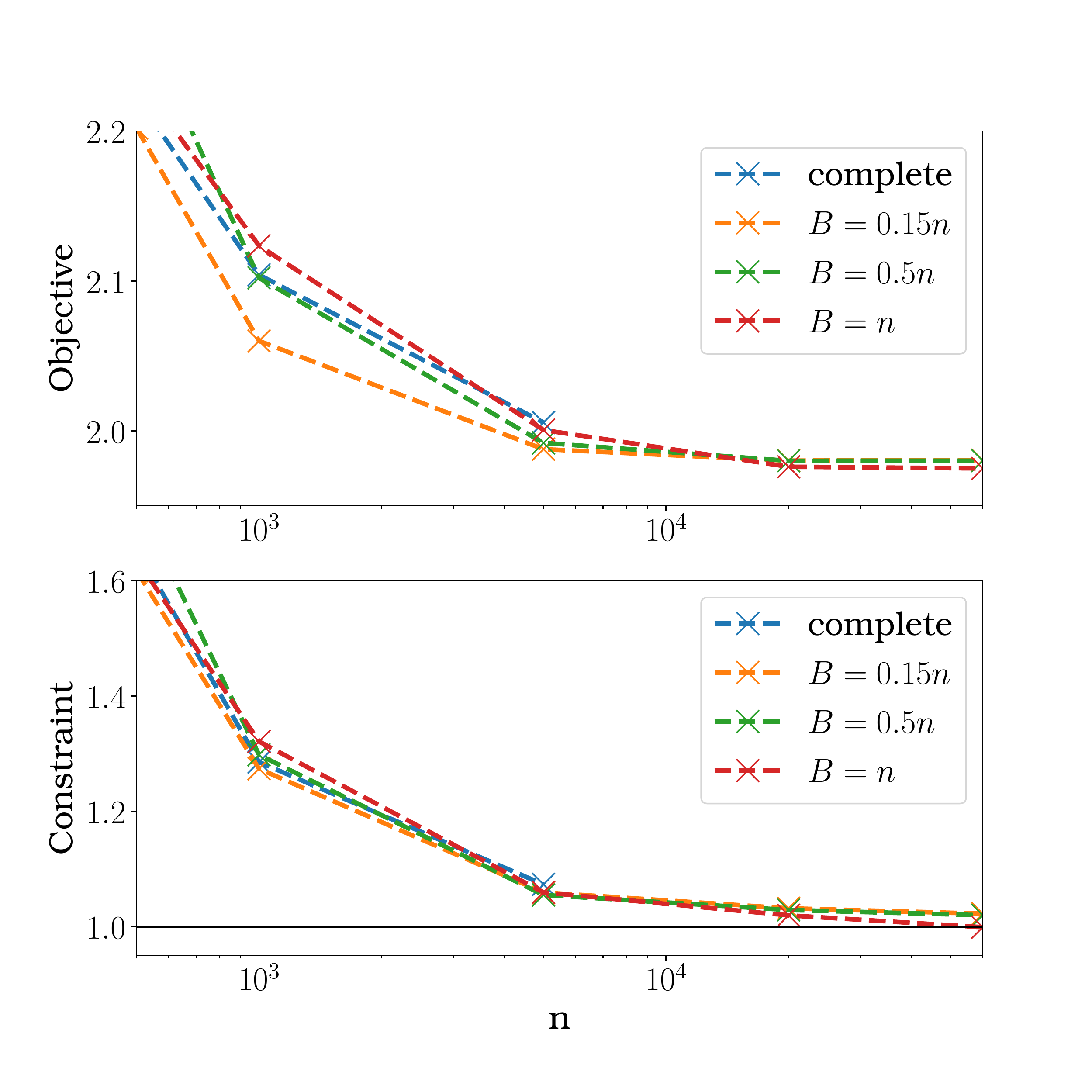}
\caption{Value of objective and constraint on the test set for various levels of approximation of the negative risk, averaged on 5 runs for 
each set of parameters $(n,B)$.}
\label{fig:incomplete_U_stats}
\end{figure}

\subsection{Scalability by Sampling}
We illustrate the results of Section~\ref{sec:scalability} on MMC \citep{Xing2002a}, a popular metric learning algorithm whose formulation is very close to the one we consider. We introduce the set of Mahalanobis distances $d_A$ indexed by a positive semidefinite matrix $A$:
\begin{align*}
d_A(x,x') &=\sqrt{(x-x')^\top A (x-x')}.
\end{align*}  
MMC solves the following problem (using projected gradient ascent):
\begin{align*}
\begin{split}
\max_A \; &\frac{1}{n_-} \sum_{1\le i < j \le n} \I\{ Y_i \ne Y_j \} \cdot d_A(X_i, X_j) \\
\text{s.t.} \; & \frac{1}{n_+} \sum_{1\le i < j \le n} \I\{ Y_i = Y_j\} \cdot d_A^2( X_i, X_j) \le 1\\
& A \succeq 0\\
\end{split}
\end{align*}  
We use MNIST dataset, composed of $70,000$ images representing the 
0-9 handwritten digits, with classes roughly equally distributed.
We randomly split it into a training set and a test set of $10,000$ instances. 
As done in previous work, the dimension of the features is reduced using PCA to keep 90\% of the explained variance.
We approximate the average over negative pairs by sampling $K$-tuples with $B$ terms, as proposed in Section~\ref{sec:scalability} (pair-based sampling performs similarly on this dataset). 
We aim to show that optimizing the criterion on the resulting smaller set of pairs does not significantly impact the learning rate (yet greatly reduces training time). 
We solve MMC on the training set for a varying number of training instances $n$ and of $K$-tuples $B$, and 
report the objective and constraint values on the test set.
The results, summarized in 
Figure~\ref{fig:incomplete_U_stats}, confirm the small performance loss due to subsampling, 
for a huge improvement in terms of computing time. Indeed, when $n = 60,000$, the total number of negative pairs 
is almost $2$ billions while $B = 0.15 n$ corresponds to sampling only $400,000$ pairs.


\section{Conclusion}
\label{conclusion}

We have introduced a rigorous probability framework to study similarity learning from the novel perspective of pairwise bipartite ranking and pointwise ROC optimization. We derived statistical guarantees for generalization in this context, and analyzed the impact of using sampling-based approximations. Our results are illustrated on a series of numerical experiments.
Our study opens promising directions of future work. We are especially interested in extending our results to allow the rejection of queries from unseen classes (e.g., unknown identities) at test time \citep[see for instance][]{Bendale2015a}. This could be achieved by incorporating a loss function to encourage the score of all positive pairs to be above some fixed threshold, below which we would reject the query.


\newpage

\section*{Acknowledgments}

This work was supported by IDEMIA.
We would like to thank Anne Sabourin for her substantial feedback that has greatly improved this work,
as well as the ICML reviewers for their constructive input.

%

\bibliographystyle{icml2018}
\bibliography{Ref_Ranking}

\appendix
\onecolumn

\section*{SUPPLEMENTARY MATERIAL}

\section{Technical proofs}

\subsection{Definitions}
\label{appendix-definitions}

We recall a few useful definitions.

\begin{definition}[VC-major class of functions -- \citealp{VanderVaart1996}]
A class of functions $\mathcal{F}$ such that $\forall f \in \mathcal{F}$, $f:\X \to \R$ is called VC-major
if the major sets of the elements in $\mathcal{S}$ form a VC-class of sets in $\X$.
Formally, $\mathcal{S}$ is a VC-major class if and only if:
\begin{align*}
\left \{  \left \{ x \in \X \mid f(x) > t \right \} \mid f \in \mathcal{F}, t \in \R \right \}
\; \text{is a VC-class of sets.}
\end{align*}
\end{definition}

\begin{definition}[$U$-statistic of degree 2 -- \citealp{Lee90}]
Let $\X$ be some measurable space and $V_1,\; \ldots,\; V_n$ i.i.d. random variables valued in $\X$ and $K:\X^2\rightarrow \mathbb{R}$ a measurable symmetric mapping s.t. $h(V_1,V_2)$ is square integrable. The functional $U_n(h)=(1/n(n-1))\sum_{i\neq j}h(V_i,V_j)$ is referred to as a symmetric $U$-statistic of degree two with kernel $h$. It classically follows from Lehmann-Scheff\'e's lemma that it is the unbiased estimator of the parameter $\mathbb{E}[h(V_1,V_2)]$ with minimum variance.
\end{definition}

\begin{definition}[Generalized $U$-statistic -- \citealp{JMLRincompleteUstats}]
Let $K \ge 1$  and $(d_1, \dots, d_K) \in \N^{*K}$. 
Let $X_{1, \dots, n_k} = \{ X_1^{(k)}, \dots, X_{n_k}^{(k)} \}$, $1 \le k \le K$, be $K$ independent samples of sizes
$n_k \ge d_k$ and composed of i.i.d. random variables taking their values in some measurable space $\X_k$ with 
distribution $F_k(dx)$ respectively. Let $h : \X_1^{d_1} \times \dots \times \X_K^{d_K} \to \R$ be a measurable 
function, square integrable with respect to the probability distribution $\mu = F_1^{\otimes d_1} \otimes \dots 
\otimes F_k^{\otimes d_k}$. Assume in addition (without loss of generality) that $h(x^{(1)} , \dots , x^{(K)})$
is symmetric within each block of arguments $x^{(k)}$ (valued in $\X_k^{d_k}$), $1 \le k \le K$. The generalized
(or $K$-sample) $U$-statistic of degrees $(d_1, \dots, d_K)$ with kernel $h$, is then defined as
\begin{align*}
U_n(h) = \frac{1}{\prod_{k=1}^K \binom{n_k}{d_k}} \sum_{I_1} \dots \sum_{I_K} h \left ( X_{I_1}^{(1)} , 
X_{I_2}^{(2)}, \dots, X_{I_K}^{(K)}  \right ),
\end{align*}
where the symbol $\sum_{I_k}$ refers to summation over all $\binom{n_k}{d_k}$ subsets $X_{I_k}^{k} = 
\left ( X_{i_1}^{(k)}, \dots, X_{i_{d_k}}^{(k)} \right ) $ related to a set $I_k$ of $d_k$ indexes 
$1 \le i_1 < \dots < i_{d_k} \le n_k$ and $n = ( n_1 , \dots, n_K)$.
\end{definition}

\subsection{Proof of \cref{ccl-slow-rates}}

The proof relies on the following argument from \citet[][Theorem~10 therein]{Clemencon2010} which points out that
\begin{align}
\begin{split}
  &\p \left\{\left\{ R^+ (\hat{S}_n)\ge \sup_{S \in \mathcal{S}_0 : \, R^-(S) \le \alpha } R^+(S)  
  -   \Phi_{n,\delta/2} \right\} \cap   \left\{ R^- ( \hat{S}_n )\le \alpha +  \Phi_{n,\delta/2} \right\} \right\}\\
  &\ge 1 - \p\left\{ \sup_{ S \in\mathcal{S}_0 } \abs{R_n^+(S) - R^+(S)} > \Phi_{n,\delta/2}  \right\} 
  - \p\left\{ \sup_{ S \in\mathcal{S}_0 } \abs{R_n^-(S) - R^-(S)} > \Phi_{n,\delta/2}  \right\}.
  \end{split}
  \label{ineg-emp-proc2}
\end{align}

Set $p =\sum_{k=1}^K p_k^2$ and rewrite $\widehat{R}^+_n(S)=\frac{n(n-1)}{2n_+}U^+_n(S)$ and $\widehat{R}^-_n(S)=\frac{n(n-1)}{2n_-}U^-_n(S)$ where $U^+_n(S)$ and $U^-_n(S)$ has $U$-statistics. Observe that we have: $\forall n>1$,
\begin{align*}
\sup_{S \in \Scal_0}\abs{ \hat{R}_n^+ (S) - R^+(S) }
&  = \sup_{S \in \Scal_0} \abs{ \left ( \frac{n(n-1)}{2n_+} - p^{-1} \right ) U_n^+(S) +  p^{-1}\left ( U_n^+(S) - \E [U_n^+(S)] \right ) }, \\
&\le  \abs{  \frac{n(n-1)}{2n_+} - p^{-1}  } + p^{-1} \sup_{S \in \Scal_0} \abs{  U_n^+(S) - \E [U_n^+(S)] }, \\
&\le  p^{-1} \frac{\abs{ 2n_+/n(n-1) -  p }}{ p + 2n_+/n(n-1) - p  }  + p^{-1} \sup_{S \in \Scal_0} \abs{  U_n^+(S) - \E [U_n^+(S)] }. \\
\end{align*}
\citet[Section 5.6, Theorem A][]{Ser80} gives that, with probability at least $1-\delta$,
\begin{align*}
\abs{2n_+/n(n-1) - p}  < \sqrt{\frac{\log (2/\delta)}{n-1}}.
\end{align*}
Hence when $2n_+/n(n-1) - p \ge 0$, we have:
\begin{align*}
\frac{\abs{ 2n_+/n(n-1) -  p }}{ p + 2n_+/n(n-1) - p  } \le p^{-1} \sqrt{\frac{\log (2/\delta)}{n-1}}.
\end{align*}
And when $2n_+/n(n-1) - p < 0$, we have, for $n \ge 1 + 4p^{-2} \log (2/\delta)$:
\begin{align*}
\frac{\abs{ 2n_+/n(n-1) -  p }}{ p + 2n_+/n(n-1) - p  } &\le \frac{\sqrt{\frac{\log(2/\delta)}{n-1}}}{p - p/2 }, \\
&\le 2p^{-1} \sqrt{\frac{\log(2/\delta)}{n-1}}.
\end{align*}
Combining this proposition with Lemma 1 gives that: with probability at least $1-\delta$,  
$ \forall n \ge 1 + 4p^{-2} \log(3/\delta)$,
\begin{align*}
\sup_{S \in \Scal_0}\abs{ \hat{R}_n^+ (S) - R^+(S) }
&\le  2 c p^{-1} \sqrt{\frac{V}{n}} + 2p^{-1}(1+p^{-1}) \sqrt{\frac{\log(3/\delta)}{n-1}} . \\
\end{align*}

Similarly, we obtain, with $q=1-p$:
\begin{align*}
\sup_{S \in \Scal_0}\abs{ \hat{R}_n^- (S) - R^-(S) }
&\le q^{-1} \frac{\abs{ 2n_-/n(n-1) -  q }}{ q + 2n_-/(n(n-1)) - q }  + q^{-1} \sup_{S \in \Scal_0} \abs{  U_n^-(S) - \E [U_n^-(S)] }, \\
\end{align*}
which gives with the exact same reasoning that: with probability at least $1-\delta$, $ \forall n \ge 1 + 4 q^{-2} \log(3/\delta)$,
\begin{align*}
\sup_{S \in \Scal_0}\abs{ \hat{R}_n^- (S) - R^-(S) }
&\le  2 c q^{-1} \sqrt{\frac{V}{n}} + 2q^{-1}(1+q^{-1}) \sqrt{\frac{\log(3/\delta)}{n-1}}, 
\end{align*}
which gives the right order of convergence. The proof is then finished by following the proof of Theorem 10 in \cite{Clemencon2010}. 


\subsection{Proof of \cref{ccl-fast-rates}}
Let $S^*(x,x')=\mathbb{I}\{(x,x')\in\mathcal{R}^*_{\alpha} \}$ be the optimal similarity function (supposed here to belong to $\mathcal{S}_0$ for simplicity).
Observe that, for any $S\in \mathcal{S}$, the statistic 
\begin{equation}\label{eq:deltaU}
\Delta_{n} (S) = \left ( U_n^+(S) - \E \left [ U_n^+(S) \right ] \right ) 
- \left ( U_n^+(S^*) - \E \left [ U_n^+(S^*) \right ] \right ),
\end{equation}
is a $U$-statistic based on $\mathcal{D}_n$ with kernel $Q_S$ given by
\begin{align*}
Q_S((x,y),(x',y')) &= \I\left \{ y=y' \right \} \left ( S(x,x')-S^*(x,x')\right ) 
- \E \left [ \I\left \{ Y=Y' \right \} \left ( S(X,X') - S^*(X,X') \right ) \right ] .
\end{align*}
The second Hoeffding decomposition of $U$-statistics, already used for the fast rate analysis carried out in
\cite{Clemencon08Ranking} in ranking risk minimization, leads to the following decomposition of the $U$-statistic
$\Delta_n(S)$:
\begin{equation}\label{eq:decomp}
\Delta_{n}(S) = 2 T_n(S)  + W_n(S) ,
\end{equation}
where 
\begin{equation*}
T_n (S)= \frac{1}{n} \sum_{i=1}^n q_S (X_i,Y_i) \text{ and } W_n(S) = \frac{2}{n(n-1)} \sum_{1 \le i < j \le n} \widehat{q}_S \left ((X_i, Y_i), (X_j , Y_j) \right ) ,
\end{equation*}
with 
\begin{align*}
q_S (x,y) &= \E \left [ Q_S\left ( (X, Y), (x, y) \right ) \right ], \\
\widehat{q}_S \left ( (x,y),(x',y') \right ) &= 
Q_S \left ( (x, y) , (x',y' ) \right ) - q_S (x,y) - q_S (x',y').
\end{align*}
The component $W_n(S)$ is a degenerate $U$-statistic, meaning that, for all $(x,y)$, $$\mathbb{E}\left[\widehat{q}_S \left ( (x,y),(X,Y) \right )\right]=0 \text{ almost-surely.}$$
The following lemma is thus a direct consequence of the bound established in \cite{ArconesGine}.
\begin{lemma}
Suppose that the hypotheses of Theorem \ref{ccl-fast-rates} are fulfilled. 
Then, for any $\delta\in (0,1)$, we have with probability larger than $1-\delta$: $\forall n\geq 2$,
 $$
 \sup_{S\in \mathcal{S}_0}\left\vert W_n(S) \right\vert \leq C\times \frac{V\log(n/\delta)}{n},
 $$
 where $C<+\infty$ is a universal constant.
 \label{arcones-gine}
 \end{lemma}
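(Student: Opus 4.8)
The plan is to recognize $W_n(S)$ as a completely degenerate $U$-statistic of degree two and to invoke the exponential inequality for degenerate $U$-processes indexed by VC-type classes due to \cite{ArconesGine}. By construction of the Hoeffding decomposition \eqref{eq:decomp}, the kernel $\widehat{q}_S$ satisfies $\E[\widehat{q}_S((x,y),(X,Y))]=0$ almost surely for every $(x,y)$, so the first-order (linear) projection vanishes and the deviations of $W_n(S)$ concentrate at the fast scale $1/n$ rather than $1/\sqrt{n}$. The whole point is therefore to upgrade the pointwise concentration of $W_n(S)$ into a bound that is uniform over the class $\mathcal{S}_0$, with the complexity of $\mathcal{S}_0$ entering only through its VC dimension $V$.

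First I would check the two structural hypotheses required by the Arcones--Gin\'e bound. For boundedness, since every $S \in \mathcal{S}_0$ satisfies $0 \le S \le 1$ and $S^* \in \mathcal{S}_0$, the base kernel $Q_S$ and hence both Hoeffding projections $q_S$ and $\widehat{q}_S$ are uniformly bounded by a universal constant. For complexity, I would argue that the family $\{\widehat{q}_S : S \in \mathcal{S}_0\}$ inherits the VC-type property: each $\widehat{q}_S$ is obtained from $S$ by fixed measurable operations (multiplication by indicators $\I\{y=y'\}$ and subtraction of expectations and of the first-order projections), and such operations preserve polynomial covering numbers, so the covering numbers of the projected kernel class are controlled by those of the VC-major class $\mathcal{S}_0$, with governing dimension of order $V$.

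With these two properties in hand, I would apply the moment and exponential inequalities of \cite{ArconesGine} for degenerate $U$-processes of degree two indexed by uniformly bounded VC-type classes. These yield a tail bound in which $\sup_{S \in \mathcal{S}_0} \abs{W_n(S)}$ is controlled at scale $V/n$, the extra $\log n$ factor being characteristic of the entropy integral for second-order degenerate $U$-processes and the $\log(1/\delta)$ contribution arising from the exponential tail. Setting the failure probability to $\delta$ and absorbing numerical constants then produces the claimed bound $\sup_{S \in \mathcal{S}_0} \abs{W_n(S)} \le C\, V \log(n/\delta)/n$ valid for all $n \ge 2$.

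The main obstacle I anticipate is the complexity-control step rather than the probabilistic one: I must verify carefully that the Hoeffding-projected kernels form a genuine VC-type class whose metric-entropy parameters are governed by $V$, and track how the degree-two degeneracy produces \emph{exactly one} logarithmic factor in $n$ rather than a squared log. Once the VC-type property is certified with the correct dimension, the quantitative rate is a direct consequence of the cited degenerate $U$-process inequality, which is precisely why the statement can be asserted as an immediate corollary.
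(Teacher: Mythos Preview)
Your proposal is correct and follows exactly the same approach as the paper, which simply states that the lemma is ``a direct consequence of the bound established in \cite{ArconesGine}'' without further elaboration. If anything, you have fleshed out the argument more carefully than the paper does, by explicitly identifying the boundedness and VC-type complexity hypotheses that the Arcones--Gin\'e inequality requires and by noting where the $\log n$ and $\log(1/\delta)$ factors originate.
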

 This result shows that the second term in decomposition \eqref{eq:decomp} is uniformly negligible with respect to $T_n(S)$. The final ingredient is the following lemma, which provides a control of the variance of $T_n(S)$ under the {\bf NA} condition.
 \begin{lemma} Suppose that the hypotheses of \cref{ccl-fast-rates} are fulfilled. Then, 
 for any $S\in \mathcal{S}_0$, we have:
 \begin{equation}
\text{Var} (q_S(X,Y)) 
\le  c \left [ (1-Q^*_\alpha) p ( R^+ (S^*) - R^+ (S) ) + (1-p) Q^*_\alpha (R^-(S) - R^-(S^*) ) \right ]^a. 
\label{var-control-1-eq}
 \end{equation}
 \label{var-control-1}
 \end{lemma}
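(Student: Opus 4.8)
The plan is to reduce the variance of the first-order projection $q_S$ to an expected ``disagreement'' between $S$ and $S^*$, to identify the bracket on the right-hand side as the excess cost-sensitive risk, and finally to convert one into a power of the other by means of the noise assumption NA.

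First I would exploit that $q_S$ is a \emph{centered} first-order Hoeffding projection, so that $\mathrm{Var}(q_S(X,Y)) = \E[q_S(X,Y)^2]$. Since $q_S(x,y)=\E[Q_S((X,Y),(x,y))]$ is a conditional expectation of the centered kernel $Q_S$, Jensen's inequality gives $\E[q_S^2]\le \E[Q_S((X,Y),(X',Y'))^2]$, and because $Q_S$ is centered this last quantity equals $\mathrm{Var}\big(\I\{Y=Y'\}g(X,X')\big)$ with $g:=S-S^*$. Bounding the variance by the second moment, using $\E[\I\{Y=Y'\}\mid X,X']=\eta(X,X')$ together with $0\le g^2\le \abs{g}\le 1$, yields the key intermediate bound
\[
\mathrm{Var}(q_S(X,Y)) \;\le\; \E\big[\eta(X,X')\,g(X,X')^2\big] \;\le\; \E\big[\,\abs{S(X,X')-S^*(X,X')}\,\big].
\]

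Next I would make the right-hand side of the lemma explicit. Writing $p\,R^+(S)=\E[S(X,X')\eta(X,X')]$ and $(1-p)R^-(S)=\E[S(X,X')(1-\eta(X,X'))]$, a direct expansion collapses the bracket into the excess cost-sensitive risk
\[
\mathcal{E}(S):=(1-Q^*_\alpha)p\,(R^+(S^*)-R^+(S))+(1-p)Q^*_\alpha(R^-(S)-R^-(S^*))=\E\big[(S^*-S)(\eta-Q^*_\alpha)\big].
\]
Since $S^*=\I\{\eta\ge Q^*_\alpha\}$ and $S$ is $[0,1]$-valued, the integrand $(S^*-S)(\eta-Q^*_\alpha)$ is pointwise nonnegative, so $\mathcal{E}(S)=\E[\,\abs{S-S^*}\,\abs{\eta-Q^*_\alpha}\,]$. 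This is exactly the quantity that must dominate the expected disagreement $\E[\abs{S-S^*}]$ appearing in the previous display.

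The crux, and the step I expect to be the main obstacle, is then to pass from $\E[\abs{S-S^*}]$ to a power of $\mathcal{E}(S)$ using NA. The cleanest route is a single application of Hölder's inequality: working conditionally on $X$, where NA holds almost surely, write $g^2=(\abs{g}\,\abs{\eta-Q^*_\alpha})^{a}\cdot(g^{2-a}\abs{\eta-Q^*_\alpha}^{-a})$ and pair the exponents $1/a$ and $1/(1-a)$; since $\abs{g}\le 1$ the second factor is controlled by the negative moment of $\abs{\eta-Q^*_\alpha}$ furnished by NA, while the first factor produces $\mathcal{E}(S)^{a}$. Integrating the resulting conditional bound over $X$ with Jensen's inequality then gives $\mathrm{Var}(q_S(X,Y))\le c\,\mathcal{E}(S)^{a}$. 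The delicate point is precisely invoking NA at the correct order so that the bounded-disagreement factor is integrable and the power of the excess risk comes out as $a$; an alternative thresholding argument (split the inner $X'$-integral on $\{\abs{\eta-Q^*_\alpha}\lessgtr t\}$, bound the small-margin event by $c\,t^{a}$ via Markov's inequality and the large-margin part by $t^{-1}\mathcal{E}(S)$, then optimize in $t$) reaches the same type of bound but requires careful bookkeeping of the constant, which depends on $c$, $a$, $Q^*_\alpha$ and $p$, hence on $\kappa$.
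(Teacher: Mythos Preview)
Your identification of the bracket with the excess cost-sensitive risk $\mathcal{E}(S)=\E[(S^*-S)(\eta-Q^*_\alpha)]=\E[\abs{S-S^*}\,\abs{\eta-Q^*_\alpha}]$ is correct and matches the paper. The gap is in the variance-to-excess-risk conversion.

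\textbf{The Jensen step is too lossy.} When you pass from $\E[q_S^2]$ to $\E[Q_S^2]\le \E[\abs{g}]$ (with $g=S-S^*$) you destroy the nested structure $\E_{X}\big[(\E_{X'}[\cdot])^2\big]$, and from $\E[\abs{g}]$ alone the exponent $a$ is \emph{unreachable} under NA. Concretely, your H\"older decomposition $g^2=(\abs{g}\,\abs{\eta-Q^*_\alpha})^a\cdot g^{2-a}\abs{\eta-Q^*_\alpha}^{-a}$ with the pair $(1/a,1/(1-a))$ produces a second factor
\[
\big(\E_{X'}\big[\abs{g}^{(2-a)/(1-a)}\abs{\eta-Q^*_\alpha}^{-a/(1-a)}\big]\big)^{1-a},
\]
which requires a bound on $\E_{X'}[\abs{\eta-Q^*_\alpha}^{-a/(1-a)}]$; NA only controls the $(-a)$-th moment, and $a/(1-a)>a$. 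Your thresholding alternative also falls short: splitting on $\{\abs{\eta-Q^*_\alpha}\lessgtr t\}$ and optimizing in $t$ gives $\E[\abs{g}]\lesssim \mathcal{E}(S)^{a/(1+a)}$, not $\mathcal{E}(S)^a$. In fact $\E[\abs{g}]\le c\,\mathcal{E}(S)^a$ can fail under NA: take $\abs{\eta-Q^*_\alpha}=h(X')$ with $h(u)=u^{\beta}$ on $[0,1]$ for some $\beta<1/a-1$ (so NA holds), and $g=\I\{X'\le\varepsilon\}$; then $\E[\abs{g}]=\varepsilon$ while $\mathcal{E}(S)^a\asymp \varepsilon^{a(\beta+1)}$ with $a(\beta+1)<1$.

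\textbf{What the paper does instead.} It keeps the bound $\mathrm{Var}(q_S)\le \E_{X}\big[(\E_{X'}[\I\{Y=Y'\}g(X,X')])^2\big]$, uses the layer-cake representation $S=\int_0^1\I\{(x,x')\in\mathcal{R}_{S,u}\}\,du$ to reduce $g$ to differences of \emph{indicators}, and then applies Cauchy--Schwarz to the \emph{inner} conditional expectation:
\[
\big(\E_{X'}[\I_u]\big)^2 \;\le\; \E_{X'}\!\big[\I_u\,\abs{\eta-Q^*_\alpha}^{a}\big]\cdot \E_{X'}\!\big[\abs{\eta-Q^*_\alpha}^{-a}\big]\;\le\; c\,\E_{X'}\!\big[(\I_u\,\abs{\eta-Q^*_\alpha})^{a}\big],
\]
where the last step uses that $\I_u$ is an indicator. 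The outer square is precisely what ``pays'' for the split into the margin factor and the NA factor, and a final Jensen (concavity of $t\mapsto t^a$) over $(X,u)$ yields the exponent $a$ on $\mathcal{E}(S)$. To fix your argument, do not apply Jensen to collapse $q_S^2$ into $Q_S^2$; keep $\E_X[(\E_{X'}[\cdot])^2]$ and run Cauchy--Schwarz conditionally on $X$ after the layer-cake reduction.
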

\begin{proof}

We introduce $\ominus$ as the symmetric difference operator between two sets.
Introducing
$\Rcal_{S,u} = \{x,x' \mid S(x,x') \ge u\}$ and $\Rcal_\alpha^* = \{x,x' \mid \eta(x,x') > Q^*_\alpha \}$, we have that:
\begin{align*}
\text{Var} \left ( q_S(X,Y) \right ) & \le \E_{X,Y} \left [ \left ( \E_{X',Y'} 
\left [ \I\{Y = Y' \} (S(X,X') - S^*(X,X'))\right ] \right )^2  \right ],  \\
&\qquad (\text{By the second moment bound}), \\
& \le \int_0^1 \E_{X} \left [ \left ( \E_{X'} 
\left [   \I\{Y = Y' \} (\I\{ (X,X') \in \Rcal_{S,u} \} - \I\{ (X,X') \in \Rcal_\alpha^* \} ) \right ] \right )^2  
\right ] du, \\
& \qquad \left ( \text{Since } S(X,X') = \int_0^1 \I\{  (X,X') \in \Rcal_{S,u} \} \, du \text{ and } 
S^*(X,X') = \I \{(X,X') \in \Rcal_\alpha^* \},  \right ) \\
& \le \int_0^1 \E_{X} \left [ \left ( \E_{X'} 
\left [ \I\{  (X,X') \in \Rcal_{S,u} \ominus \Rcal_\alpha^*  \}  \right ] \right )^2  \right ] du, \\
& \le \int_0^1 \E_{X} \left [  
\E_{X'} \left [ \I\{  (X,X') \in \Rcal_{S,u} \ominus \Rcal_\alpha^*  \} \abs{\eta(X,X')-Q^*_\alpha}^{a} \right ] 
\E_{X'} \left [ \abs{\eta(X,X')-Q^*_\alpha}^{-a} \right ]
\right ] du, \\
&\qquad (\text{By Cauchy-Schwartz's inequality}), \\
& \le c  \left ( \int_0^1 \E_{X} \left [  
\E_{X'} \left [ \I\{  (X,X') \in \Rcal_{S,u} \ominus \Rcal_\alpha^*  \} \abs{\eta(X,X')-Q^*_\alpha} \right ] 
\right ] du \right )^a , 
\addtocounter{equation}{1}\tag{\theequation} \label{variance-bound} \\
& \qquad  \text{(By the beforementioned noise hypothesis followed by Jensen's inequality)}.
\end{align*}

The right-hand-side of \cref{variance-bound} is very similar to an expression of the excess risk in binary 
classification, see \citet[][Eq~1 therein]{Boucheron2005}. We now link it to the right-hand side of 
\cref{var-control-1-eq}.

Formally, note that:
\begin{align*}
R^+(S)-R^+(S^*) &= p^{-1} \E \left [ \I\{Y=Y'\}(S(X,X') - S^*(X,X')) \right ], \\
&= p^{-1} \int_0^1 \E \left [ \eta(X,X')(\I\{  (X,X') \in \Rcal_{S,u} \} - \I\{ (X,X') \in \Rcal_\alpha^* \} ) \right ] du, \\
&= p^{-1} \int_0^1 \E \left [ \eta(X,X')\I\{  (X,X') \in \Rcal_{S,u} \ominus \Rcal_\alpha^*  \} 
\left (  1 - 2 \I\{ (X,X') \in \Rcal_\alpha^* \}  \right ) \right ] du, \\
&= - p^{-1} \int_0^1 \E \left [ \abs{\eta(X,X')-Q^*_\alpha}\I\{  (X,X') \in \Rcal_{S,u} \ominus \Rcal_\alpha^*  \} \right ] du +
 p^{-1} Q^*_\alpha \E \left [S(X,X') - S^*(X,X) \right ] du, \\
 & \qquad \left ( \text{Since } \left ( 1-2\I\{(X,X') \in \Rcal_\alpha^* \} \right ) \left ( \eta(X,X')-Q^*_\alpha \right )  
 = - \abs{\eta(X,X')-Q^*_\alpha } .  \right )
\end{align*}
which implies that
\begin{align}
\int_0^1 \E \left [ \abs{\eta(X,X')-Q^*_\alpha}\I\{  (X,X') \in \Rcal_{S,u} \ominus \Rcal_\alpha^*  \} \right ] du &=
(1-Q^*_\alpha)p(R^+(S^*) - R^+(S)) + Q^*_\alpha (1-p) (R^-(S)-R^-(S^*)).
\label{excess-risk-reformulation}
\end{align}

Combining \cref{variance-bound} and \cref{excess-risk-reformulation} completes the proof.

\end{proof} 
 
 \cref{var-control-1} is the analogue of Lemma 11 in \cite{Clemencon2010} and the fast rate bound stated in Theorem \ref{ccl-fast-rates} then classically follows from the application of Talagrand's inequality (or Bernstein's inequality when $\mathcal{S}_0$ is of finite cardinality), see \textit{e.g.} subsection 5.2 in \cite{Boucheron2005} and Theorem 12 in \cite{Clemencon2010}.

We state here the proof for the case where $\mathcal{S}_0$ is of finite cardinality.
Proving this result for more general classes of functions $\Scal_0$ is tackled by the localization argument expressed in 
\citet[][pages 341-346 therein]{Boucheron2005} --- we omit it to avoid stretching the proof unnecessarily.

Since $\abs{Q_S} \le 2$, Bernstein's inequality gives that: 
for all $S\in\mathcal{S}_0$, with probability at least $1-\delta$,
\begin{align*}
T_n(S) \le \frac{4\log (1/\delta) }{3n} + \sqrt{ \frac{ 2\text{Var} (q_S(X,Y)) \log (1/\delta ))}{ n} }.
\end{align*}
If $\mathcal{S}_0$ is of cardinality $M$, it implies that: with probability at least $1-\delta$,
for all $S \in\mathcal{S}_0$,
\begin{align}
T_n(S) \le \frac{4\log (M / \delta) }{3n} + \sqrt{ \frac{ 2\text{Var} (q_S(X,Y)) \log ( M / \delta ))}{ n} }.
\label{union-bound-Tn}
\end{align}
An equivalent of \cref{arcones-gine} for the case of finite classes of functions $\mathcal{S}_0$ can be derived from 
\citet[][Theorem~4.1.13 therein]{PenaG99}, which is that: with probability at least $1-\delta$,
\begin{align}
\sup_{S \in\mathcal{S}_0 }
\abs{W_n(S)} \le \frac{2C \log (4 M / \delta)}{n},
\label{union-bound-Wn}
\end{align}
since $W_n(S)$ is a degenerate $U$-statistic. $C$ is a universal constant.

Combining \cref{union-bound-Tn} and \cref{union-bound-Wn} give that: with probability at least $1-\delta$, 
for all $S \in\mathcal{S}_0$
\begin{align}
\Delta_n(S) \le \frac{C'\log (5 M / \delta) }{n} 
+ \sqrt{ \frac{ 2\text{Var} (q_S(X,Y)) \log ( 5 M / \delta ))}{ n} }.
\label{bound-delta-n-uniform}
\end{align}
where $C' = 2C + 4/3$.

The proof of \cref{ccl-slow-rates} may be adapted to the finite class setting. Formally, introducing the tolerance
term:
\begin{align*}
\Phi_{n,\delta}' = 2 \kappa^{-1} (1+\kappa^{-1})
\sqrt{\frac{\log(2(M+1)/\delta)}{n-1}},
\end{align*}
where $N$ is the cardinal of the proposition class $\mathcal{S}_0$, we have with probability $1-\delta$,
\begin{align}
R^+ (\hat{S}_n) &\ge \sup_{S\in \mathcal{S}_0:\; R^-(S)\leq \alpha} R^+(S)  - \Phi_{n,\delta/2}'
\quad \text{and} \quad 
R^-(\hat{S}_n) \le \alpha + \Phi_{n,\delta/2}', \label{finite-theorem-1} \\
&\sup_{S \in \mathcal{S}_0} \abs{R_n^+(S)-R^+(S)} \le \Phi_{n,\delta/2}'. \label{sup-control}
\end{align}

\cref{sup-control} implies that $S^*$ satisfies the constraint of the ERM problem \cref{simlearn}, 
hence $R_n^+(\hat{S}_n) - R_n^+(S^*) \ge 0$. It follows that :
\begin{align*}
\Delta_n(\hat{S}_n) &= \frac{2n_+}{n(n-1)} \left ( R_n^+(\hat{S}_n) - R_n^+(S^*) \right ) 
+ p \left ( R^+(S^*) - R^+ (\hat{S}_n) \right ),\\
&\ge  p \left ( R^+(S^*) - R^+ (\hat{S}_n) \right ).
\addtocounter{equation}{1}\tag{\theequation} \label{Delta-n-lower-bound} 
\end{align*}

Let $\hat{S}_n$ be the solution of \cref{simlearn} with $\Phi_{n,\delta'/2}'$, 
where $\delta' = 2(M+1)\delta /(9M+4)$.
Introducing $ K_{\delta,M} = ( 9 M + 4 )/ \delta$, we combine 
\cref{var-control-1}, \cref{bound-delta-n-uniform}, \cref{finite-theorem-1} and \cref{Delta-n-lower-bound},
to obtain that: with probability at least $1-\delta$,
\begin{align}
p \left ( R^+(S^*) - R^+(\hat{S}_n) \right ) \le \frac{C'\log K_{\delta,M} }{n} 
+ \sqrt{ \frac{ 2 c \log K_{\delta,M} }{ n} } \left ( 
[(1-Q^*_\alpha) p (R^+(S^*) - R(\hat{S}_n)) ]^{a/2} 
+  [Q^*_\alpha (1-p) \Phi_{n,\delta'/2}' ]^{a/2}  \right ).
\label{End-bound} 
\end{align}
The highest order term on the right-hand side is in $O ( n^{-1/2}\Phi_{n,\delta'/2}'^{a/2} )$ which is $O(  n^{-(2+a)/4})$.
\cref{End-bound} is a fixed point equation in $R^+(S^*) - R^+ (\hat{S}_n)$.
Finding an upper bound on the solution of this fixed-point equation is done by invoking \citet[][Lemma~7 therein]{Cucker2002}, which we recall here for completion.
\begin{lemma}
Let $c_1, c_2 > 0$ and $s > q > 0$. Then the equation
$x^s - c_1 x^q - c_2 = 0,$ has a unique positive zero $x^*$.
In addition, $x^* \le \max \{ (2c_1)^{1/(s-q)}, (2c_2)^{1/s} \}$.
\label{cucker-lemma}
\end{lemma}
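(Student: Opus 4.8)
The plan is to study the single-variable function $g(x) = x^s - c_1 x^q - c_2$ on $(0,\infty)$ and to extract both the existence/uniqueness claim and the quantitative bound from elementary monotonicity arguments. First I would record the boundary behaviour: since $s > q > 0$ and $c_1, c_2 > 0$, we have $g(x) \to -c_2 < 0$ as $x \to 0^+$ (both $x^s$ and $x^q$ vanish), and $g(x) \to +\infty$ as $x \to \infty$ because the term $x^s$ dominates $x^q$. The intermediate value theorem then guarantees at least one positive zero.

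For uniqueness, the cleanest route is to factor out the lowest power: write $g(x) = x^q h(x)$ for $x > 0$, where $h(x) = x^{s-q} - c_1 - c_2 x^{-q}$. Since $x^q > 0$ on $(0,\infty)$, the positive zeros of $g$ coincide exactly with those of $h$. Differentiating gives $h'(x) = (s-q)x^{s-q-1} + c_2 q\, x^{-q-1}$, and both summands are strictly positive for $x > 0$ (using $s - q > 0$, $q > 0$, $c_2 > 0$); hence $h$ is strictly increasing. Combined with $h(x) \to -\infty$ as $x \to 0^+$ (driven by the term $-c_2 x^{-q}$) and $h(x) \to +\infty$ as $x \to \infty$, this yields a \emph{unique} positive zero $x^*$, establishing the first claim. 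A useful byproduct is the sign information $h(x) < 0$ for $x < x^*$ and $h(x) > 0$ for $x > x^*$, equivalently $g(x) < 0$ and $g(x) > 0$ on those respective ranges.

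To prove the bound I would set $M = \max\{(2c_1)^{1/(s-q)}, (2c_2)^{1/s}\}$ and simply check that $g(M) \ge 0$; by the sign information above this forces $x^* \le M$. The verification rests on a factor-of-two split. From $M \ge (2c_1)^{1/(s-q)}$ we get $M^{s-q} \ge 2c_1$, hence $c_1 M^q \le \tfrac{1}{2} M^s$, while $M \ge (2c_2)^{1/s}$ gives $M^s \ge 2c_2$, hence $c_2 \le \tfrac{1}{2} M^s$. Substituting these two inequalities yields $g(M) = M^s - c_1 M^q - c_2 \ge M^s - \tfrac{1}{2}M^s - \tfrac{1}{2}M^s = 0$, as desired.

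This lemma is elementary and I do not anticipate a genuine obstacle; the only points requiring care are choosing the factorization $g = x^q h$ so that strict monotonicity (and thus uniqueness) is transparent, and recognizing the factor-of-two allocation that lets the two bounding quantities each absorb one of the $c_1 x^q$ and $c_2$ terms at $x = M$. Once the sign behaviour of $g$ around $x^*$ is in hand, the inequality $x^* \le M$ is immediate from $g(M) \ge 0$.
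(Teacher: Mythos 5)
Your proof is correct, and there is in fact nothing in the paper to compare it against: the paper does not prove this lemma, but recalls it verbatim from Cucker and Smale (2002, Lemma~7) and applies it as a black box to bound the solution of the fixed-point inequality arising in the fast-rate analysis. Your argument is sound and self-contained: the factorization $g(x)=x^q h(x)$ with $h(x)=x^{s-q}-c_1-c_2x^{-q}$ makes strict monotonicity (hence uniqueness of the zero) transparent, and the sign dichotomy $g<0$ on $(0,x^*)$, $g>0$ on $(x^*,\infty)$ correctly turns the verification $g(M)\ge 0$ at $M=\max\{(2c_1)^{1/(s-q)},(2c_2)^{1/s}\}$ into the bound $x^*\le M$; the factor-of-two split $c_1M^q\le\tfrac12 M^s$ and $c_2\le\tfrac12 M^s$ checks out. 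For comparison, the classical argument in the original reference derives the bound by a case split at the zero itself: either $c_1x^{*q}\ge c_2$, in which case $x^{*s}=c_1x^{*q}+c_2\le 2c_1x^{*q}$ gives $x^*\le(2c_1)^{1/(s-q)}$, or else $x^{*s}\le 2c_2$ gives $x^*\le(2c_2)^{1/s}$. The two routes are equivalent in substance; yours has the mild advantage of needing only an evaluation of $g$ at $M$ together with the monotonicity-based sign information, rather than manipulating the defining equation of $x^*$.
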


Applying \cref{cucker-lemma} to \cref{End-bound} concludes the proof.

\subsection{Remarks on the noise assumption NA}
\label{NA-remark}

We now discuss the type of conditions under which our noise assumption is true.
Assume that there exist $A \subset \X, \p(X\in A ) = 1$, such that for all $x \in A$, the random variable 
$\eta(x,X)$ has an absolutely continuous distribution on $[0,1]$ 
and its density is bounded by $B$. Intuitively, this assumption means that the problem of ranking elements modeled by $X$ according to their similarity with 
an element $x\in A$ is somewhat easy (almost-surely).

In this case, we have that: for any $\epsilon >0$,
\begin{align*}
\quad \E_{X'} \left ( \abs{\eta(X,X')-Q^*_\alpha}^{-1+ \epsilon} \right )  \le \frac{2B}{\epsilon} \quad \text{almost surely},
\end{align*}
which implies that the fast rate of convergence of \cref{ccl-fast-rates} applies for any $a \in (0,1)$.
The proof is given below and follows the same arguments as \citet[][Corollary~8 therein]{Clemencon08Ranking}.

\begin{proof}
Let $x \in A$ and $h_x$ be the density of $\eta(x,X)$, with $h_x \le B$. Hence,
for any $a \in (0,1)$,
\begin{align*}
\E_{X'} \left ( \abs{\eta(x,X)-Q^*_\alpha}^{a} \right ) &=
\int_{0}^1 \abs{z-Q^*_\alpha}^{a}  h_x(z) dz, \\
&\le B \left( \int_{0}^{Q^*_\alpha} (Q^*_\alpha-z)^{a} dz + 
\int_{Q^*_\alpha}^1 (z-Q^*_\alpha)^{a}  dz  \right ), \\
&\le B \left( \frac{Q^{*1+a}_\alpha}{1+a} + 
\frac{(1 - Q^{*}_\alpha)^{1+a}}{1+a} \right ), \\
&\le \frac{2B}{1+a}.
\end{align*}
\end{proof}


\subsection{Proof of of \cref{prop:var}}

Conditioned upon the $(Y_i)_{i=1}^n$, the sample $\left ( X_i, Y_i \right )_{i=1}^n$ is seen as $K$ samples 
$\left ( X_{i_k}^{(k)} \right )_{i_k=1}^{n_k}$, $k \in \{ 1, \dots , K\}$, one for each class, and
$R_n^-(S)$ can be written as a $K$-sample generalized $U$-statistic with kernel $h_S$. Indeed,
\begin{align}
\begin{split}
R_n^-(S) &= \frac{1}{n_-} \sum_{k<l} 
\sum_{ i_{k} = 1}^{n_k} \sum_{i_l=1}^{n_l} S \left ( X_{i_k}^{(k)}, X_{i_l}^{(l)} \right ), \\
 &=  \frac{1}{\prod_{k=1}^K n_k }\sum_{ i_{1} = 1}^{n_1}  \dots  \sum_{ i_{K} = 1}^{n_K}  
   \sum_{k<l}  \frac{n_k n_l}{n_-} S \left ( X_{i_k}^{(k)}, X_{i_l}^{(l)} \right ), \\
    &=  \frac{1}{\prod_{k=1}^K n_k }\sum_{ i_{1} = 1}^{n_1}  \dots  \sum_{ i_{K} = 1}^{n_K}  
    h_S \left ( X_{i_1}^{(1)} , \dots, X_{i_K}^{(K)} \right ).\\
\end{split}
\label{alternative-view-neg-risk}
\end{align}

From \citet[][Equation~(21) therein]{JMLRincompleteUstats}, we have that
\begin{align*}
\text{Var}\left ( \widetilde{R}_B^- (S) \right ) &= 
\left( 1 - \frac{1}{B} \right) \text{Var} \left ( R_n^-(S)\right ) 
+ \frac{1}{B} \text{Var} \left ( h_S \left ( X^{(1)} , \dots , X^{(K)} \right ) \right ),
\end{align*}
which gives the result for tuple-based sampling since 
$\text{Var}\left ( R_n^-(S) \right ) = O \left ( 1 / n \right )$ when $B_0/n \to 0$, $n\to \infty$
 and $n_k/n \to p_k > 0$ for all $k\in\{1,\dots, K \}$. 

 
Straightforwardly adapting the proof of \citet[][Equation~(21) therein]{JMLRincompleteUstats} for the case of 
pair-based sampling gives:
\begin{align*}
\text{Var} \left ( \bar{R}_B^-(S) \right )  &=  \left ( 1 - \frac{1}{B} \right ) \text{Var} \left ( R_n^-(S) \right )
+ \frac{1}{B} \left [  \sum_{k<l}  \frac{n_k n_l}{n_-} \E \left [S^{2} \left ( X^{(k)}, X^{(l)} \right )\right ] 
- \left ( \sum_{k<l} \frac{n_k n_l}{n_-} \E \left [ S \left ( X^{(k)}, X^{(l)} \right )\right ] \right )^2 \right ],
\\
 &=  \left ( 1 - \frac{1}{B} \right ) \text{Var} \left ( R_n^-(S) \right )
+ \frac{1}{B} \text{Var} \left ( S \left ( X,X' \right ) \, | \, Y \ne Y' \right ).  \\
\end{align*}

\subsection{Proof of \cref{thm:incomplete}}
The proof of \cref{thm:incomplete} is based on an equivalent of \cref{ccl-slow-rates}, in the case where we 
condition upon the labels. It boils down to studying the same problem with $K$ independent samples of i.i.d. data, 
one for each class.
Each sample is written $(X_i^{(k)})_{i=1}^{n_k}$ for all $k \in \{1,\dots, K\}$.

\begin{theorem}
Let $\alpha \in (0,1)$, assume that $S^* \in \mathcal{S}_0$ and that $\mathcal{S}_0$ is a VC-major class of VC-dimension $V$. Let $N = \min_{k \in \{1, \dots, K\} } n_k$.
For all $(\delta, n) \in (0,1) \times \N^*$, set 
\begin{align*}
\Phi_{n,\delta} &= 2 \sqrt{\frac{2V}{N-1}} + \sqrt{\frac{\log(2/\delta)}{N}},
\end{align*}
then we have simultaneously, with probability at least $1-\delta$,
\begin{align*}
R^+(\hat{S}_n) \ge R^+_* - 2 \Phi_{n,\delta} \qquad \text{and} \qquad R^-(\hat{S}_n) \le \alpha + 2 \Phi_{n,\delta}.
\end{align*}
\label{ccl-slow-rates-conditional}
\end{theorem}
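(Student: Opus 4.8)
The plan is to mirror the proof of \cref{ccl-slow-rates}, working throughout conditionally on the label vector $(Y_1,\dots,Y_n)$, so that the sample splits into $K$ independent i.i.d. subsamples $(X_i^{(k)})_{i=1}^{n_k}$ of respective sizes $n_k\ge N=\min_k n_k$. First I would reuse the deterministic reduction of \eqref{ineg-emp-proc2}: the event on which both conclusions hold contains the intersection of $\{\sup_{S\in\mathcal{S}_0}|\widehat{R}_n^+(S)-R^+(S)|\le\Phi_{n,\delta}\}$ and $\{\sup_{S\in\mathcal{S}_0}|\widehat{R}_n^-(S)-R^-(S)|\le\Phi_{n,\delta}\}$, where $R^+,R^-$ are now read as the conditional grand means, i.e. the expectations of the corresponding multi-sample $U$-statistics. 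Hence it suffices to control each maximal deviation over $\mathcal{S}_0$ by $\Phi_{n,\delta}$ with probability at least $1-\delta/2$, and then to copy the concluding argument of \citet[Theorem~10]{Clemencon2010}. Conditioning on the labels removes the extra fluctuation term $|2n_+/(n(n-1))-p|$ that appeared in \cref{ccl-slow-rates}, which is precisely why no $\kappa$-dependence survives in $\Phi_{n,\delta}$ here.

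Next I would treat the two deviations separately, exploiting their $U$-statistic structure. For the positive risk, conditioning gives $\widehat{R}_n^+(S)=\sum_{k=1}^K \binom{n_k}{2}n_+^{-1}\,U_{n_k}^{(k)}(S)$, a convex combination of the within-class degree-two $U$-statistics $U_{n_k}^{(k)}(S)=\binom{n_k}{2}^{-1}\sum_{i<j}S(X_i^{(k)},X_j^{(k)})$; by the triangle inequality and convexity the maximal deviation is dominated by $\max_k \sup_{S\in\mathcal{S}_0}|U_{n_k}^{(k)}(S)-\E U_{n_k}^{(k)}(S)|$, to which the VC-major $U$-process bound of \cref{lem:Ubounds} applies with the smallest sample size $n_k\ge N$, producing the $\sqrt{V/(N-1)}$ complexity term. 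For the negative risk, using the representation \eqref{alternative-view-neg-risk} of $\widehat{R}_n^-(S)$ as a $K$-sample generalized $U$-statistic with kernel $h_S$, the natural device is Hoeffding's blocking trick: write the generalized $U$-statistic as an average over arrangements of i.i.d. ``diagonal'' blocks, each block being an average of $N$ independent tuples drawn one per class, thereby reducing the $U$-process deviation to that of a standard i.i.d. empirical process indexed by $\mathcal{S}_0$ over $N$ observations. Applying a VC / bounded-differences bound to this reduced empirical process and combining the complexity and Hoeffding-type confidence contributions yields the $2\sqrt{2V/(N-1)}$ and $\sqrt{\log(2/\delta)/N}$ terms that make up $\Phi_{n,\delta}$.

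I expect the generalized $U$-process concentration to be the main obstacle, since one must certify that the effective sample size is genuinely $N=\min_k n_k$ and that the VC-major complexity of $\mathcal{S}_0$ controls the reduced empirical process uniformly across the combinatorially many arrangements. The blocking reduction has to be carried out so that, after conditioning on a single arrangement, the summands are i.i.d. and the symmetrization argument of \citet{Bousquet2004} applies unchanged, the bound then being transferred back to $\widehat{R}_n^-(S)$ by convexity of the supremum. Once both suprema are bounded by $\Phi_{n,\delta}$ each with probability $1-\delta/2$, a union bound together with the reduction of the first paragraph gives both inequalities simultaneously with probability at least $1-\delta$: since $S^*\in\mathcal{S}_0$ is feasible, it stays empirically feasible on the good event, so $\widehat{R}_n^+(\hat{S}_n)\ge\widehat{R}_n^+(S^*)$, and chaining the deviation bound on either side of the empirical constraint produces the factor $2\Phi_{n,\delta}$, which absorbs both the constraint tolerance used in the empirical problem and the uniform deviation, exactly as in \cref{ccl-slow-rates}.
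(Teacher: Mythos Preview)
Your proposal is essentially correct and follows the same architecture as the paper: reduce via \eqref{ineg-emp-proc2}, control the two maximal deviations separately after conditioning on the labels, and combine by union bound. For the negative risk your plan coincides with the paper's: view $\widehat{R}_n^-(S)$ as a $K$-sample generalized $U$-statistic (\cref{alternative-view-neg-risk}) and reduce to an i.i.d. process of effective sample size $N$ via Hoeffding blocking, which is exactly the content of \citet[Proposition~2]{JMLRincompleteUstats} that the paper invokes.

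The one technical divergence is on the positive risk. You bound $\sup_{S}|\widehat{R}_n^+(S)-R^+(S)|$ by $\max_k \sup_S|U_{n_k}^{(k)}(S)-\E U_{n_k}^{(k)}(S)|$ and then appeal to \cref{lem:Ubounds} with $n_k\ge N$; this requires a union bound over the $K$ classes and therefore produces a $\sqrt{\log(K/\delta)/N}$ confidence term rather than the $\sqrt{\log(2/\delta)/N}$ appearing in $\Phi_{n,\delta}$. The paper instead exploits the \emph{independence} of the $K$ within-class $U$-statistics at the Chernoff level (bounding the moment generating function of the weighted sum by the product of per-class factors, as in the proof of \citet[Corollary~3]{Clemencon08Ranking}), which yields directly the weighted bound $\sum_k \tfrac{n_k(n_k-1)}{2n_+}\bigl(2c\sqrt{V/n_k}+\sqrt{\log(1/\delta)/n_k}\bigr)$ without any $\log K$ penalty. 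Your route is still valid and gives the same rate in $N$; it just does not recover the exact constant in the stated $\Phi_{n,\delta}$.
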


\cref{ccl-slow-rates-conditional} is proven by controlling the tail of the supremum over $\mathcal{S}_0$ of the absolute deviation
of $R_n^+$ and $R_n^-$ around their respective means, which is adressed by the two following \cref{pos-process-conditional} and \cref{neg-process-conditional}.

\begin{lemma}
Assume that $\Scal_0$ is a VC-major class of VC-dimension $V$. Let $c$ be a universal constant.
With probability at least $1-\delta$,
\begin{align*}
\sup_{S \in \mathcal{S_0}} \abs{R_n^+(S)-R^+(S)} \le \sum_{k=1}^K \frac{n_k(n_k-1)}{2n_+} 
\left ( 2c \sqrt{\frac{V}{n_k}} + \sqrt{\frac{\log(1/\delta)}{n_k}} \right ).
\end{align*}
\label{pos-process-conditional}
\end{lemma}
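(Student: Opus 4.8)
The plan is to carry out the whole argument conditionally on the labels $(Y_i)_{i=1}^n$, so that the sample decomposes into $K$ independent i.i.d. sub-samples $(X_i^{(k)})_{i=1}^{n_k}$, one per class, and the positive empirical risk becomes a convex combination of per-class degree-two $U$-statistics. Concretely, grouping the positive pairs by their common class yields
\begin{align*}
\widehat{R}_n^+(S) = \sum_{k=1}^K w_k\, U_{n_k}^{(k)}(S), \qquad w_k = \frac{n_k(n_k-1)}{2n_+}, \qquad U_{n_k}^{(k)}(S) = \frac{2}{n_k(n_k-1)}\sum_{1\le i<j\le n_k} S\big(X_i^{(k)}, X_j^{(k)}\big),
\end{align*}
with $\sum_k w_k = 1$ since $n_+ = \sum_k \binom{n_k}{2}$. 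Each $U_{n_k}^{(k)}(S)$ is a $U$-statistic with symmetric kernel $S$ on the class-$k$ sample, and its mean is $R_k^+(S)=\E[S(X^{(k)},X'^{(k)})]$, so that the relevant target here is the conditional mean $R^+(S)=\E[\widehat{R}_n^+(S)\mid(Y_i)]=\sum_k w_k R_k^+(S)$.

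I would then split $Z := \sup_{S\in\Scal_0}\abs{\widehat{R}_n^+(S)-R^+(S)}$ into a mean part and a fluctuation part. For the mean, subadditivity of the supremum and the triangle inequality give $Z \le \sum_k w_k \sup_{S\in\Scal_0}\abs{U_{n_k}^{(k)}(S)-R_k^+(S)}$, and the symmetrization/VC argument underlying \cref{lem:Ubounds} (following \citealp{Clemencon08Ranking,Bousquet2004}) bounds the expectation of each per-class supremum by $2c\sqrt{V/n_k}$, since $\Scal_0$ is VC-major of dimension $V$ and $0\le S\le 1$. This yields $\E[Z]\le \sum_k w_k\, 2c\sqrt{V/n_k}$.

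For the fluctuation term, the key idea is to apply a \emph{single} bounded-differences (McDiarmid) inequality to $Z$ regarded as a function of all $n$ independent observations, instead of union-bounding $K$ separate tails. Because $0\le S\le 1$, replacing one observation $X_i^{(k)}$ perturbs $U_{n_k}^{(k)}(S)$ by at most $2/n_k$ uniformly in $S$, hence perturbs $Z$ by at most $2w_k/n_k$ (note $R^+$ is deterministic given the labels). McDiarmid then gives, with probability at least $1-\delta$,
\begin{align*}
Z \le \E[Z] + \sqrt{2\log(1/\delta)\sum_{k=1}^K \frac{w_k^2}{n_k}} \le \E[Z] + \sum_{k=1}^K w_k\sqrt{\frac{2\log(1/\delta)}{n_k}},
\end{align*}
where the last inequality uses $\sqrt{\sum_k (w_k/\sqrt{n_k})^2}\le \sum_k w_k/\sqrt{n_k}$. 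Adding the two bounds recovers the announced estimate, up to the universal constant in front of the deviation term.

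The main obstacle is precisely obtaining a single $\log(1/\delta)$ factor rather than the $\log(K/\delta)$ that a naive union bound over the $K$ classes would produce: this is exactly what the combined McDiarmid step buys, at the price of carrying the weights $w_k$ through the bounded-difference coefficients and invoking the $\ell^2$–$\ell^1$ inequality above. A secondary point requiring care is that I use only the expected-supremum half of \cref{lem:Ubounds} as a \emph{deterministic} bound on $\E[Z]$, reserving all randomness for the one concentration step; one must also verify that the bounded-differences argument genuinely applies to the supremum of a $U$-process, which it does since $Z$ is a measurable function of the independent observations with the controlled coefficients computed above.
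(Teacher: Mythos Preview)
Your proof is correct and follows essentially the same route as the paper: decompose $\widehat{R}_n^+(S)$ as the convex combination $\sum_k w_k\,U_{n_k}^{(k)}(S)$ of independent per-class degree-two $U$-statistics, bound the expected supremum via the symmetrization/VC argument behind \cref{lem:Ubounds}, and then use a single concentration step across all $n$ observations exploiting the independence between classes to avoid the $\log(K/\delta)$ that a union bound would incur. The paper phrases that last step as ``Chernoff's bound allows us to take advantage of the independence of the $U$-statistics'' and then refers back to the proof of Corollary~3 in \citet{Clemencon08Ranking}; your direct application of McDiarmid to $Z$ with bounded-difference coefficients $2w_k/n_k$ is exactly the same device (McDiarmid being the Chernoff/Azuma method specialized to bounded increments), and your $\ell^2$--$\ell^1$ step to pass from $\sqrt{\sum_k w_k^2/n_k}$ to $\sum_k w_k/\sqrt{n_k}$ is precisely what is needed to recover the stated form, up to the harmless constant in the deviation term that you already flagged.
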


The proof of \cref{pos-process-conditional} is based on viewing $R_n^+(S)$ as a weighted average of independent one-sample $U$-statistics of degree two, which writes:
\begin{align*}
R_n^+(S) &= \sum_{k=1}^K \frac{n_k(n_k-1)}{2n_+} \left (  \frac{2}{n_k(n_k-1)} \sum_{ i<j } S \left (X_i^{(k)}, X_j^{(k)} \right ) \right ).
\end{align*}
Chernoff's bound allows us to take advantage of the independence of the $U$-statistics. 
The end of the proof is similar to the one of \citet[][Corollary~3 therein]{Clemencon08Ranking}.

\begin{lemma}
\begin{align*}
\sup_{S \in \mathcal{S_0}} \abs{R_n^-(S)-R^-(S)} \le  2 c \sqrt{\frac{V}{N}} + \sqrt{\frac{\log(1/\delta)}{N}}.
\end{align*}
\label{neg-process-conditional}
\end{lemma}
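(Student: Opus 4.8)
The plan is to mirror the argument behind \cref{pos-process-conditional}, the only difference being that the negative risk cannot be split into independent one-sample $U$-statistics; instead I would exploit its representation \eqref{alternative-view-neg-risk} as a $K$-sample generalized $U$-statistic of degree $(1,\dots,1)$ with kernel $h_S$, and reduce it to an ordinary i.i.d.\ empirical process through Hoeffding's representation trick. Writing $N=\min_k n_k$ and $M=\prod_k n_k!$, and averaging over the products of permutations $\sigma=(\sigma_1,\dots,\sigma_K)$, one has the exact identity
\[
R_n^-(S)=\frac1M\sum_\sigma W_\sigma(S),\quad W_\sigma(S)=\frac1N\sum_{i=1}^N h_S\bigl(X^{(1)}_{\sigma_1(i)},\dots,X^{(K)}_{\sigma_K(i)}\bigr),
\]
valid because a counting argument shows that each $K$-tuple occurs equally often along the diagonal positions. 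The key gain is that, for fixed $\sigma$, the $N$ summands of $W_\sigma$ use pairwise disjoint indices in every sample, so $W_\sigma(S)$ is a genuine empirical mean of $N$ i.i.d.\ copies of $h_S(X^{(1)},\dots,X^{(K)})$.

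Next I would transfer the deviation bound from the full statistic to a single block. Applying $\sup_S\frac1M\sum_\sigma(\cdot)\le\frac1M\sum_\sigma\sup_S(\cdot)$ together with convexity of $t\mapsto e^{\lambda t}$ to the identity above bounds the moment generating function of $\sup_S\abs{R_n^-(S)-\E R_n^-(S)}$ by that of $\sup_S\abs{W_1(S)-\E W_1(S)}$, since all blocks are identically distributed. This removes the pairwise dependence and leaves a standard i.i.d.\ empirical process over $N$ observations, to which the VC control underlying \cref{lem:Ubounds} \citep{Clemencon08Ranking} applies, producing the factor $N$ in the denominator.

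The step I expect to be the main obstacle is to bound the complexity of the kernel class $\{h_S:S\in\mathcal{S}_0\}$ \emph{without} an extra $K$-dependent factor, since $h_S=\sum_{k<l}w_{kl}S(\cdot,\cdot)$ with $w_{kl}=n_kn_l/n_-$ is a convex combination over $\binom{K}{2}$ pairs and is not itself a VC-major function. I would pull the supremum inside the convex combination using sub-additivity of the empirical Rademacher complexity, so that $\E_\epsilon\sup_S\frac1N\sum_i\epsilon_i h_S(X^{(1)}_i,\dots,X^{(K)}_i)\le\sum_{k<l}w_{kl}\,\E_\epsilon\sup_S\frac1N\sum_i\epsilon_i S(X^{(k)}_i,X^{(l)}_i)\le c\sqrt{V/N}$: each inner term is the Rademacher complexity of the VC-major class $\mathcal{S}_0$ (dimension $V$) evaluated on $N$ pairs, and $\sum_{k<l}w_{kl}=1$. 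It is exactly this normalisation of the weights that cancels the $K$-dependence and recovers the clean constant $2c\sqrt{V/N}$. Combining this with symmetrization and a bounded-difference inequality for the $[0,1]$-valued mean $W_1$ yields the additional $\sqrt{\log(1/\delta)/N}$, giving the stated bound; as in \cref{pos-process-conditional}, the centring $R^-(S)$ is understood as the conditional mean given the labels, the gap with the unconditional risk being absorbed afterwards exactly as the term $\abs{2n_-/(n(n-1))-q}$ is handled in the proof of \cref{ccl-slow-rates}.
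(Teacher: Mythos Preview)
Your proposal is correct and follows essentially the same approach as the paper, which simply invokes \citet[][Proposition~2]{JMLRincompleteUstats} after viewing $R_n^-(S)$ as the $K$-sample generalized $U$-statistic \eqref{alternative-view-neg-risk}. What you have written is effectively a reconstruction of the proof of that proposition: Hoeffding's permutation representation to reduce to an i.i.d.\ block of size $N=\min_k n_k$, convexity of the exponential to pass to a single block, and a VC/Rademacher control of the resulting empirical process; your observation that $h_S=\sum_{k<l}w_{kl}S(\cdot,\cdot)$ with $\sum_{k<l}w_{kl}=1$ kills any $K$-dependence in the complexity term is exactly the mechanism that yields the clean $2c\sqrt{V/N}$ constant.
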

The proof of \cref{neg-process-conditional} is based on viewing $R_n^-(S)$ as a $K$-sample generalized $U$-statistic, see \cref{alternative-view-neg-risk}.
\cref{neg-process-conditional} then follows from \citet[][Proposition~2 therein]{JMLRincompleteUstats}.

\cref{thm:incomplete} then follows the same steps as \citet[][Theorem~6 therein]{JMLRincompleteUstats}, which gives a tail bound 
on the supremum of $\abs{ \widetilde{R}_B^- - \hat{R}_n^- } $ over $\mathcal{S}_0$. Combining the bound with \cref{pos-process-conditional}, \cref{neg-process-conditional} and
\cref{ineg-emp-proc2} gives the final result.


\section{Experiments}

\subsection{Pointwise ROC optimization}
We define $\mathcal{S}_0$ as the set of bilinear similarities with norm-constrained matrices, i.e.
\begin{align*}
\mathcal{S}_0 = \left \{ S_A : x, x' \mapsto \frac{1}{2} \left ( 1 + x^\top A x' \right ) \; \big | \; 
\norm{A}_F^2 \le 1 \right \},  
\end{align*}
then \cref{simlearn} is written
\begin{align*}
\max_A \; & \frac{1}{n_+} \sum_{1\le i < j \le n} 
\I\left \{ Y_i = Y_j \right \} \cdot S_A \left ( X_i,X_j \right ), \\
\text{s.t.} \; & \frac{1}{n_-} \sum_{1\le i < j \le n} \I\left \{ Y_i \ne Y_j \right \} \cdot
S_A \left ( X_i, X_j \right ) \le \alpha, \\
& ||A||_F^2 \le 1,
\end{align*}
which is equivalent to, with $\beta = 2 \alpha - 1$, 
\begin{align*}
\min_A \; & - \langle P, A \rangle,  \\
\text{s.t. } \; &  \langle N, A \rangle \le \beta, \\
&  \langle A, A \rangle \le 1.
\end{align*}

The problem is always feasible when $\beta \ge 0$. When $\beta < 0$, it is feasible when $\beta \ge - \norm{N}_2 $.
Introducing $(\lambda, \gamma ) \in \mathbb{R}_+$ as the dual variables, the KKT conditions for this problem are
\begin{itemize}
\item Stationarity: $- P + \lambda N + 2 \gamma A = 0$,
\item Primal feasibility: $\langle N,A \rangle \le \beta $ and $\langle A,A \rangle \le 1$,
\item  Dual feasibility: $\lambda \ge 0$ and $\gamma \ge 0$,
\item Complementary slackness: $\lambda ( \langle N,A \rangle - \beta) = 0 $ and 
$ \gamma( \langle A,A \rangle - 1) = 0$,
\end{itemize}
which is solved by considering several cases, specifically,
\begin{itemize}
\item When $P=0$, then $\lambda = 0, \gamma=0$ and any feasible solution suits.
\item When $P$ and $N$ are positively colinear, $\gamma=0, \lambda > 0$ and
any feasible matrix such that $\langle A, A\rangle < 1$, $\langle N , A\rangle = \beta$ suits. 
\item When $\langle N, P \rangle < \beta \norm{P}_2$, $\gamma>0, \lambda = 0$ and the solution is 
$A = P / \norm{P}_2  $
\item Otherwise $\gamma >0, \lambda > 0$ and $\lambda, \gamma, A$ are solutions of
\begin{align*}
- P + \lambda N + 2 \gamma A = 0, \quad \langle N,A \rangle = \beta, \quad  \norm{A}^2_F = 1, \\
\end{align*}
which implies solving a quadratic equation in $\lambda$ when $\beta \ne 0$, and a linear system of equations otherwise.
\end{itemize}

\subsection{Fast rates}
In this section, we justify the choices made in the design of the experiments of \cref{experiments-fast-rates}.
Firstly, when $X$ is uniform on $\X = [0,1]$, i.e. $\mu=1$,
\begin{align}
\p \left( \abs{\eta(X,X') - Q^*_\alpha } \le t  \right) &=
\lambda \left ( \eta^{-1} \left ( \left [ Q^*_\alpha -t, Q^*_\alpha + t \right ] \right )  \right),
\label{noise-cond-unif}
\end{align}
where $\lambda$ is the Lebesgue measure.
Since in a two classes scenario, we have that
\begin{align*}
\mu &= p_1 \mu_1 + p_2 \mu_2, \\
\eta(x,x') &= \left ( p_1^2 \mu_1(x)\mu_1(x') + p_2^2 \mu_2(x)\mu_2(x') \right)/ \left (\mu(x) \mu(x')\right),
\end{align*}
we may explicit $\eta$. Specifically, when $p_1 = p_2 = 1/2$,
\begin{align}
\begin{split}
\eta(x,x') &= \frac{1}{4} \mu_1(x)\mu_1(x') + \frac{1}{4} \left ( 2 - \mu_1(x)\right ) \left ( 2 - \mu_1(x')\right ),\\
&= \frac{1}{2} + \frac{1}{2} \left ( \mu_1(x) - 1 \right )\left( \mu_1(x') - 1 \right ).\\
\end{split}
\label{two-classes-eta}
\end{align}
\cref{noise-cond-unif} and \cref{two-classes-eta} show that if $Q^*_\alpha = 1/2$ and $\mu_1$ varies rapidly close
 to the points $x$ where $\mu_1(x) = 1$, we may obtain fast speeds. To assure that $p_1 = 1/2$, we choose the graph 
 of $\mu_1$ to be symmetric in the point
$(\frac{1}{2}, 1)$. It implies that if a point $x \ge 1/2$ satisfies $\mu_1(x) = 1$, so does $1-x$.
Yet $\mu_1$ has to satisfy very specific local properties in the neighborhood of all of those points.
Hence we choose that $x=1/2$ is the only point satisfying $\mu_1(x) = 1$.

Drawing inspiration from the Mammen-Tsybakov noise condition, we solve for all $ t  \in \left [  0, \frac{1}{2} \right ] $,
\begin{align*}
 t^{\frac{1-a}{a}}  &= \mu_1\left ( \frac{1+t}{2} \right )  - \mu_1\left ( \frac{1-t}{2}  \right ), \\
 t^{\frac{1-a}{a}}  &= 2  - 2 \mu_1\left ( \frac{1 - t}{2}  \right ), \\
\mu_1\left ( \frac{1-t}{2} \right ) &= 1 - \frac{1}{2} t^{\frac{1-a}{a}}, \\
\end{align*} 
which gives that for all $x \in \left[ 0,\frac{1}{2} \right]$, $\mu_1(x) = 1- \frac{1}{2} (1-2x)^{\frac{a}{1-a}}$.

However, choosing this function would conflict with the condition $Q^*_\alpha = \frac{1}{2}$, which requires that
\begin{align}
\label{quantile-condition}
\int_{1-\eta(x,x') > \frac{1}{2}} (1-\eta(x,x'))dxdx' &= \frac{\alpha}{2}.
\end{align}
Therefore, we design a function $\mu_1$ that has the same local properties around $\frac{1}{2}$ as the above function 
but such that the condition on $Q^*_\alpha$ is verified. For that matter, we introduce variables 
$C \in \left ( 0, \frac{1}{2} \right) ,m \in \left (0,\frac{1}{2} \right ) $ such that 
\begin{align*}
\mu_1(x) = 
\begin{cases}
2C \quad &\text{if} \quad x \in [0,m], \\
1 - \abs{1-2x}^{(1-a)/a} \quad &\text{if} \quad x  \in ( m, 1/2],
\end{cases}
\end{align*}
where $m$ is fixed and $C$ is adjusted to meet \cref{quantile-condition}, 
which boils down to $C$ being the solution of a second degree equation.
Solving it gives
\begin{align*}
C = \frac{1}{2} - \frac{\sqrt{1-2\alpha}}{4m} + \frac{a(1-2m)^{a^{-1}}}{4m}.
\end{align*}
For $C$ to satisfy $0 < C < \frac{1}{2}$, the variables $m, \alpha, a$
need to be restricted, as shown by \cref{fig:3}. 
\begin{figure}
\centering
\includegraphics[width=\textwidth]{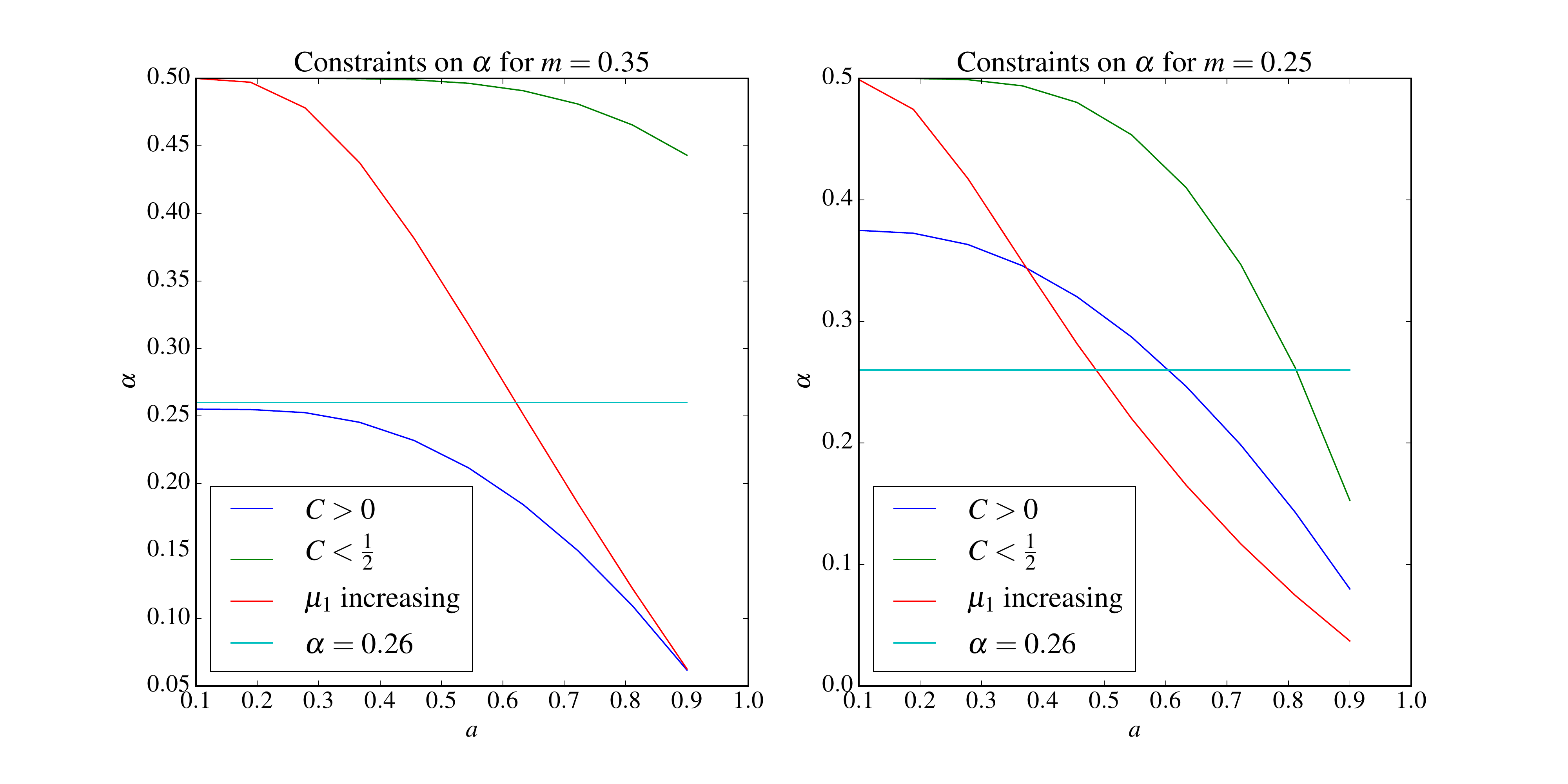}
\caption{Constraints on the value of $\alpha$ for several values of $a$ and two values of $m$.}
\label{fig:3}
\end{figure}
Experimental parameters $(m, a, \alpha)$ are valid if their corresponding point is below the green curve 
and above the dark blue curve. We see that excessively low values of $m$ restrict severely the possible values of 
$a,\alpha$. The points should be under the red curve if possible, since it would imply that $\mu_1$ 
is increasing, which assures that $\p \left ( \abs{\eta(X,X')-Q^*_\alpha } \le t \right )$ is smooth on
a larger neighborhood of $0$. Estimators of this quantity are displayed in \cref{fig:4}
for $\alpha = 0.26$, $m=0.35$.
\begin{figure}
\centering
\includegraphics[width=0.6\textwidth]{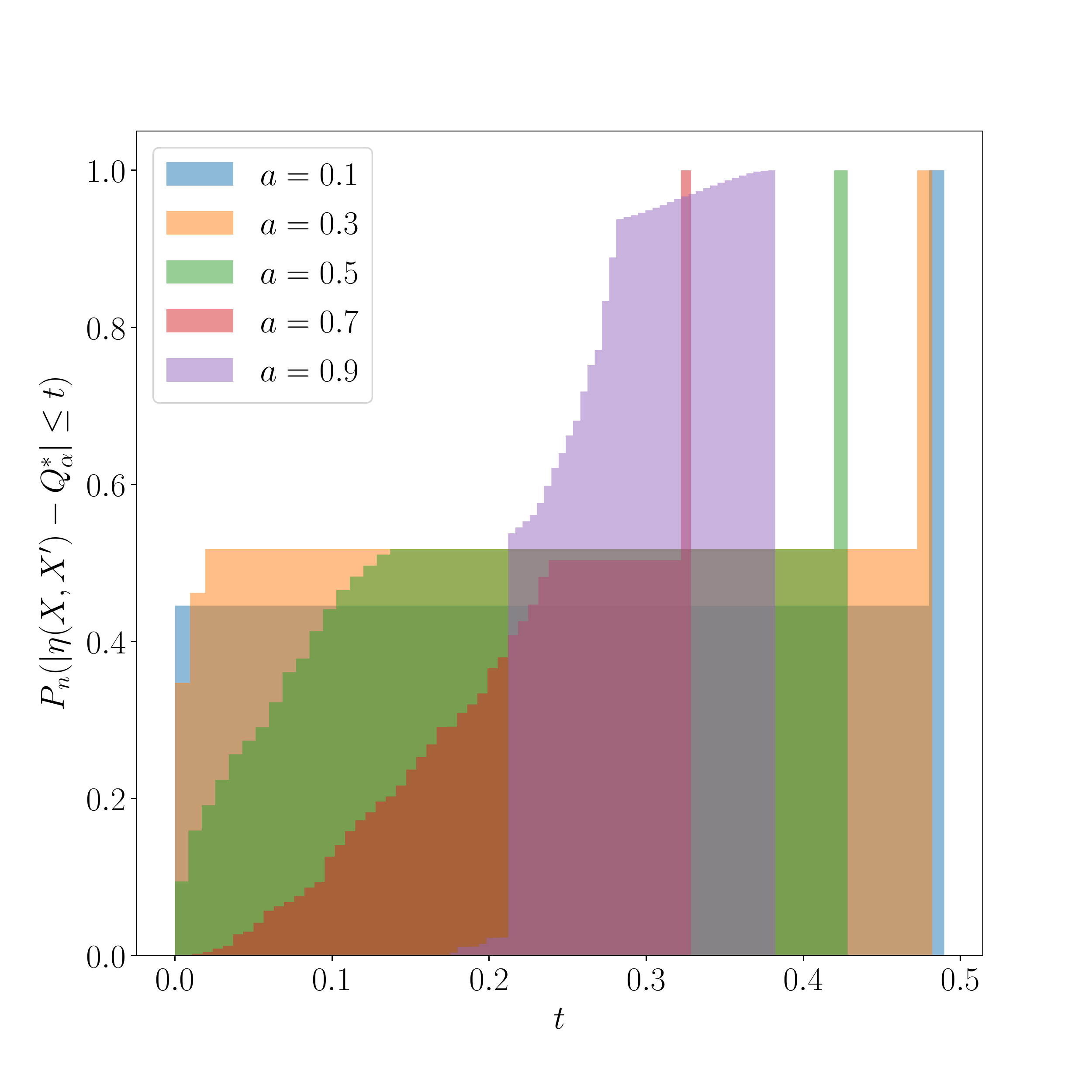}
\caption{Empirical Mammen-Tsybakov distributions for different values of $a$ when $\alpha = 0.26$, $m=0.35$.}
\label{fig:4}
\end{figure}

Now that the distribution of the data is set, we need to set the class of functions on which we optimize 
\cref{simlearn}.
For all $t \in [0,1]$, we define the proposal family $\mathcal{S}_0$ as $\left \{  x,x' \mapsto \I \{x,x' \in S_t \} 
\, | \, 0 \le t \le 1 \right \}$ where 
\begin{align*}
S_t = \left \{  x,x' \in \X\times\X \; \mid \; \left ( x \le t \cap x' \le t \right ) 
 \cup  \left ( 1-x \le t \cap 1-x' \le t \right )  \right \}.
\end{align*}
The sets $S_t$ are illustrated by \cref{fig:5}.

The risks $R^+(S)$, $R^-(S)$ of an element $S$ of $\mathcal{S}$ can be expressed in closed form with the expression 
of $\eta$. Indeed,
\begin{align*}
R^+(S) &=  2  \int_{ S_t  }\eta(x,x') dx dx', \\
&= \lambda(S_t) + \int_{ S_t  } \left ( \mu_1(x) - 1\right ) \left ( \mu_1(x')-1 \right) dx dx'.
\end{align*}
using \cref{two-classes-eta}. The right-hand side integral is easily developped since it is a sum of integrals 
over squares included in $[0,1] \times [0,1]$.

We now describe the processus of choosing an optimal empirical function $\hat{S}_n$  for a set of 
observations $(X_i,Y_i)$. For all pairs $X_i, X_j$, we derive the quantity $S_{i,j} = 
 \min \left ( \max \left (1-X_i,1-X_j \right ) , \max \left ( X_i,X_j \right )  \right)$.
 \begin{figure}
\centering
\includegraphics[width=0.6\textwidth]{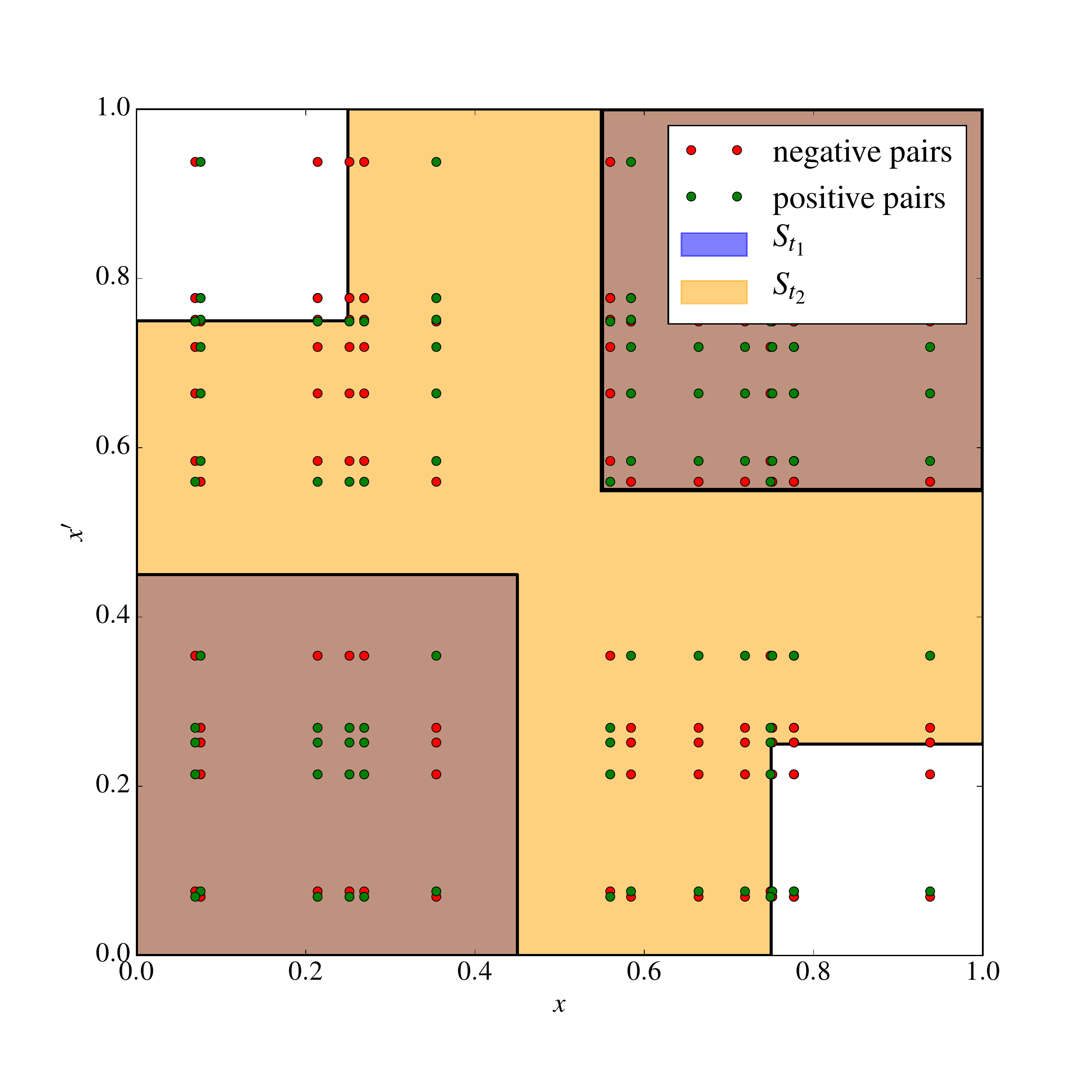}
\caption{Representation of proposal regions $S_{t_1}$, $S_{t_2}$ for $0 < t_1 < \frac{1}{2} < t_2 < 1$. 
Note that $S_{t_1} \subset S_{t_2}$.}
\label{fig:5}
\end{figure}
 Let $\sigma \left\{ 1, \dots , n(n-1)/2 \right\} \mapsto \{1,\dots , n\}^2 $ be the function that
 orders the quantities $S_{i,j}$ increasingly, i.e. $S_{\sigma(1)} \le \dots \le S_{\sigma(n(n-1)/2)}$.
Choosing an optimal empirical function $\hat{S}_n$ in $\mathcal{S}$ requires to solve the 
pointwise ROC optimization problem for $\left ( S_{i,j}, Z_{i,j} \right )_{i<j}$ 
and proposal functions 
\begin{align*}
\left \{ x \mapsto \I\left\{ x \le \frac{S_{\sigma(i)} + S_{\sigma(i+1)}}{2}\right\} 
\; \Big | \; 0\le i \le \frac{n(n-1)}{2}
\right \},
\end{align*}
where $S_{\sigma(0)} = 0$ and $S_{\sigma((n(n-1)/2) + 1)} = 1$ by convention. 
It can be solved in $O \left (n^2 \log n \right ) $ time.
 \begin{figure}
\centering
\includegraphics[width=0.7\textwidth]{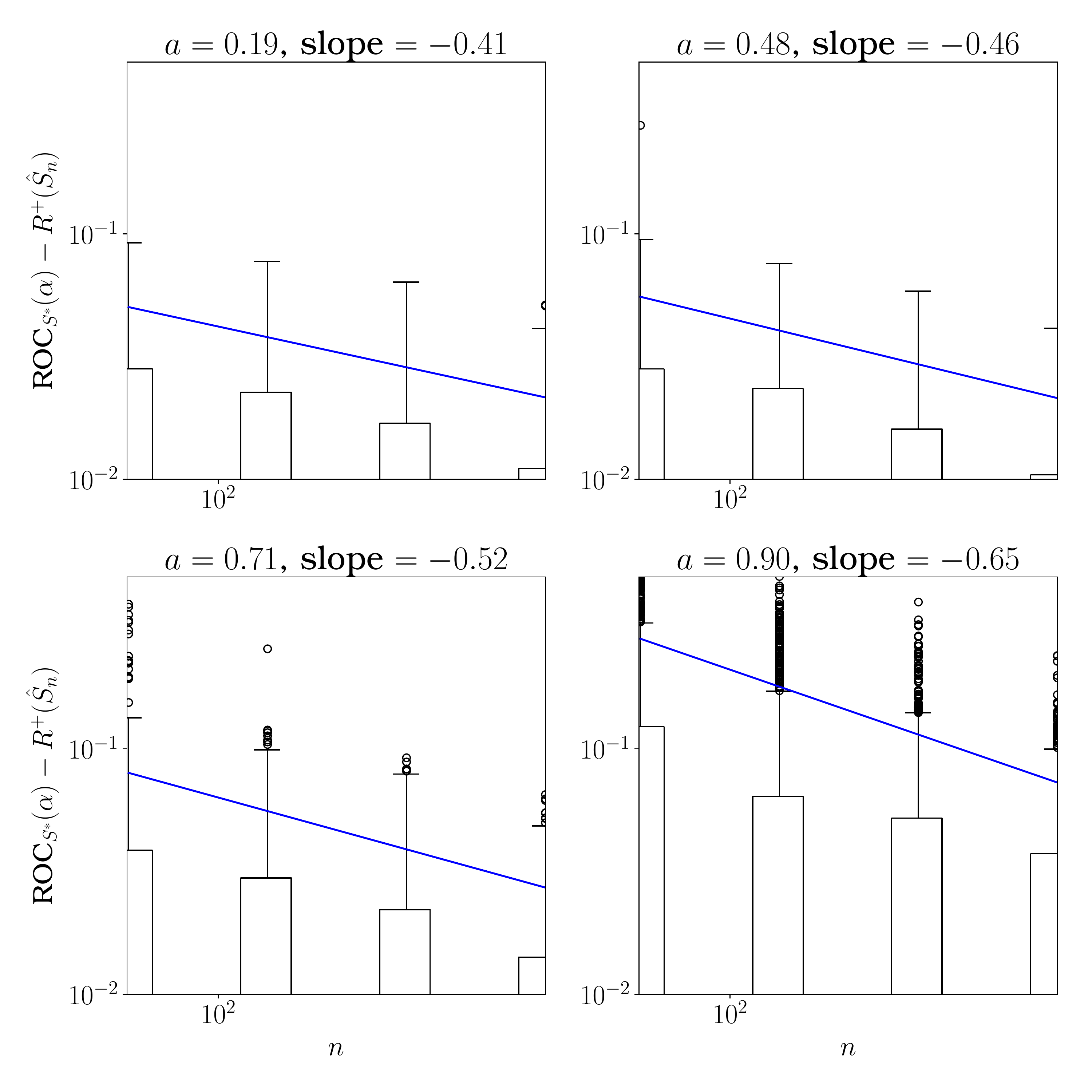}
\caption{Boxplot of the 1000 regrets $\roc_{S^*}(\alpha) - R^+(\hat{S}_n)$ for each $n$ and several values of $a$. 
The line represents the regression on the 90-quantile.}
\label{fig:6}
\end{figure}

For all $a\in \{1/10, \dots, 9/10\}$, we generate $512$ data points and compute the generalization error 
$\roc_{S^*} (\alpha) - R^+ (\hat{S}_n)$ for the $n$ first data points, where $n \in \{64, 128, 256, 512\}$, 
and repeat the operation $1000$ times. We introduce $Q_{a,n}$ as the 90-quantile of the 1000 realisations of 
$ \roc_{S^*}( \alpha ) -  R^+(\hat{S}_n) $ for a given $(n, a)$. The coefficients $(C_{a},D_{a})$ of the regression 
$ Q_{a,n} = D_{a} + C_{a} \times \log (n)$ are estimated. \cref{fig:fast_gen_speeds} shows the $C_{a}$'s 
given the $a$'s. The estimation of the $C_a$'s is illustrated by \cref{fig:6}.



\end{document}